\definecolor{trueblue}{rgb}{0.0, 0.45, 0.81}
\newcommand{\base}{\textbf{\textcolor{teal}{base}}}
\newcommand{\random}{\textbf{\textcolor{red!70!black}{random}}}
\DeclareRobustCommand{\stirling}{\genfrac\{\}{0pt}{}}
\renewcommand\AB@affilsepx{, \protect\Affilfont}
\renewcommand\Affilfont{\normalfont\fontsize{9.5}{14.4}\selectfont}
\title{Compositional Capabilities of Autoregressive Transformers:\\
    A Study on Synthetic, Interpretable Tasks
}
\author[1]{Rahul Ramesh}
\author[2,3]{Ekdeep Singh Lubana}
\author[4]{Mikail Khona}
\author[3]{\protect\\Robert P. Dick}
\author[2,5]{Hidenori Tanaka}
\affil[1]{Computer and Information Science, University of Pennsylvania}
\affil[2]{Center for Brain Science, Harvard University}
\affil[3]{Electrical Engineering and Computer Science, University of Michigan}
\affil[4]{Physics, MIT}
\affil[5]{Physics \& Information Labotratories, NTT Research}
\date{}
\theoremstyle{plain}
\newtheorem{theorem}{Theorem}[section]
\theoremstyle{definition}
\newtheorem{definition}[theorem]{Definition}
\newtheorem{conjecture}[theorem]{Conjecture}
\theoremstyle{remark}
\Crefname{Figure}{Fig.}{Figs.}
\crefname{figure}{fig.}{figs.}
\Crefname{Section}{Section}{Sections}
\crefname{section}{section}{sections}
\Crefname{Appendix}{Appendix}{Appendices}
\crefname{appendix}{appendix}{appendices}
\begin{document}

\maketitle
\vspace*{-4em}

\begin{abstract}
    Transformers trained on huge text corpora exhibit a remarkable set of capabilities, e.g., performing basic arithmetic. Given the inherent compositional nature of language, one can expect the model to learn to compose these capabilities, potentially yielding a \textit{combinatorial explosion} of what operations it can perform on an input. Motivated by the above, we train autoregressive Transformer models on a synthetic data-generating process that involves compositions of a set of well-defined monolithic capabilities. Through a series of extensive and systematic experiments on this data-generating process, we show that:
    \begin{enumerate*}
        \item autoregressive Transformers can learn compositional structures from small amounts of training data and generalize to exponentially or even combinatorially many functions;
        \item generating intermediate outputs when composing functions is more effective for generalizing to new, unseen compositions than not generating any intermediate outputs
        \item biases in the order of the compositions in the training data result in Transformers that fail to compose some combinations of functions; and
        \item the attention layers select which capability to apply while the feed-forward layers execute the selected capability.
    \end{enumerate*}
\end{abstract}

\section{Introduction}
Large scale Transformers pretrained on huge text corpora have revolutionized machine learning in recent years~\citep{radford2018improving, radford2019language, brown2020language, sanh2021multitask, wei2021finetuned, thoppilan2022lamda, touvron2023llama}. 
Due to an ever-increasing interest in adopting these models in our daily lives, evaluating and predicting their capabilities has become increasingly important~\citep{bommasani2021opportunities, ganguli2022predictability, shevlane2023model, rae2021scaling, hoffmann2022training, tay2022transcending, henighan2020scaling, hernandez2021scaling, sharma2020neural}. 
Motivated by this, recent works have performed extensive empirical analyses to understand the possibilities and limitations of using these models in practical tasks of interest. 
\begin{figure}[!tb]
    \centering
    \includegraphics[width=0.99\linewidth]{./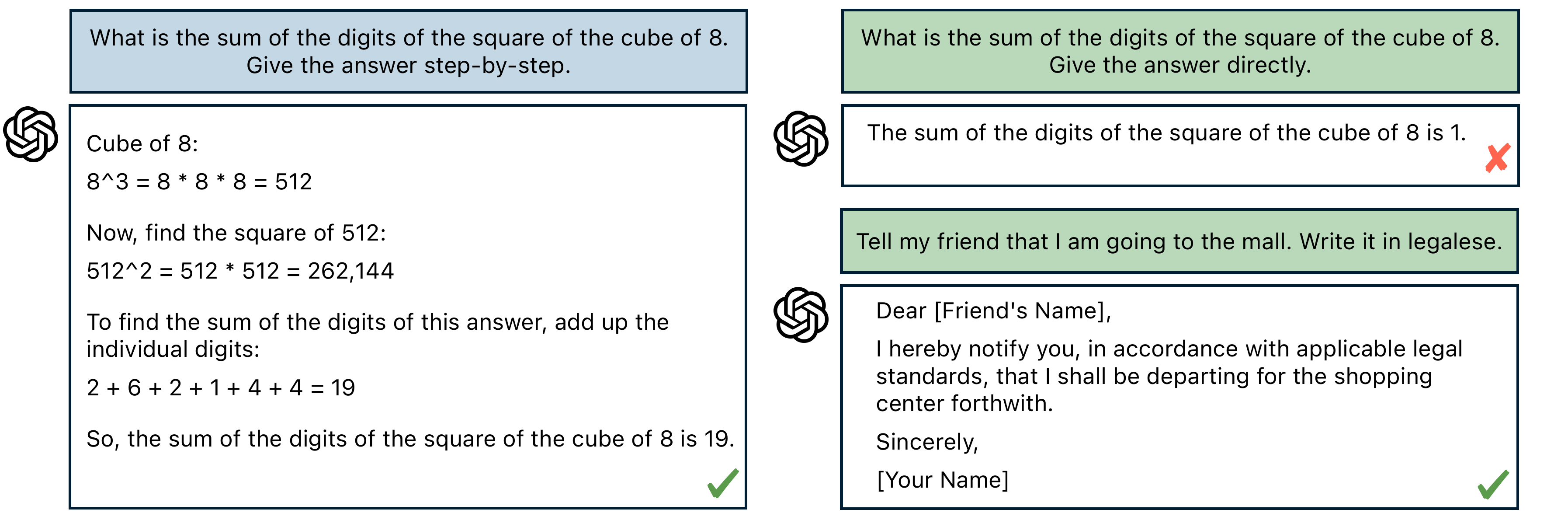}
    \caption{
        \textbf{Signatures of compositionality.}
        ChatGPT~\citep{bubeck2023sparks} correctly responds to prompts that require composition of atomic arithmetic capabilities (sum, cube, square)---we argue these prompts are unlikely to be in the training data. However, the model does not always compose reliably (top-right panel). This motivates us to study the extent to which a Transformer can learn to compose its capabilities by mere pretraining on a compositional domain.
    }
    \label{fig:llm_example}
\end{figure}
For example, such works show large language models (LLMs) can generate coherent text completions based on a provided context, perform code generation and debugging, use online APIs and tools in an automated manner, and even solve multimodal problems such as image captioning~\citep{wei2022emergent, bubeck2023sparks, austin2021program, chen2021evaluating, lee2023holistic, liang2022holistic, qin2023toolllm, liu2023visual,  suzgun2022challenging, srivastava2022beyond}. 
While such benchmarking of pretrained models is extremely valuable, it often focuses on evaluating rather ``narrow'' or ``atomic'' capabilities; for example, the ability to identify whether a given passage of text is biased or toxic~\citep{gehman2020realtoxicityprompts, liang2022holistic}. 
However, given the compositional nature of training data (such as language), a model could learn to \textit{compose} its atomic capabilities and perform complex tasks that it was never explicitly trained for.
This can lead to an underestimation of the capabilities of the model; vice versa, if the model does not learn to compose, we can be certain that benchmarking for atomic capabilities is sufficient to characterize the model.
%

Motivated by the above, we analyze if a Transformer trained on a compositional data-generating process, without any special modifications to the usual training pipeline, can learn \textit{both} relevant atomic capabilities and an ability to compose those capabilities. 
\citet{bubeck2023sparks} recently show that LLMs exhibit ``sparks'' of such
compositionality, e.g., generating text that merges content of varying styles or
evaluate mathematical expressions through the application of a sequence of functions (\Cref{fig:llm_example}). 
However, due to their black-box nature, it is unclear if an LLM actually learns to compose capabilities or merely memorizes relevant samples from its training data. 
Moreover, while interacting with an LLM, it can be difficult to guarantee that we are utilizing a prompt that will appropriately guide the model to use the capabilities we desire, let alone compose them. 

To circumvent challenges faced with LLMs pretrained on real world data and focus on our specific motivation, \textit{``can an autoregressive Transformer trained on compositional data learn to compose its capabilities''}, we choose to limit the purview of this work to a well-defined synthetic domain. 
This is similar in spirit to recent works that utilize synthetic datasets generated using objects like first-order logic machines, context-free grammars, linear regressors, modular arithmetic, and even board games to establish and understand phenomenology of modern neural networks~\citep{liu2022transformers, allen2023physics, allen2023knowledge, allen2023manipulation, garg2022can, li2023dissecting, saparov2022language, chan2022data, bhattamishra2020ability, zhou2023algorithms, nanda2023progress, nanda2023emergent, liEmergentWorldRepresentations2023, lubana2023mechanistic, jones2021scaling}. 
The goal of such works, including ours, is to develop interpretable demonstrations and mechanistic hypotheses that enable a characterization of the target phenomenology in a controlled setting. 
Accordingly, we emphasize that we do not intend to develop novel protocols for improving Transformers' ability to compositionally generalize, but rather to demonstrate its existence and understand what drives it.
Overall, we make the following contributions.
\begin{itemize}[leftmargin=16pt, itemsep=3pt, topsep=0pt, parsep=1pt, partopsep=1pt]
    \item \textbf{A minimal synthetic setup for characterizing Transformers' ability to compose.} 
        We propose a minimal setup involving compositions of predefined functions $\mathcal{F}$ (bijections and permutations) that operate on a string of arbitrary tokens (\Cref{sec:setup}), which allows us to precisely study the ability of Transformers to compose functions. Motivated by instruction induction and tuning in LLMs~\citep{honovich2022instruction, wei2021finetuned}, we instantiate a notion of ``task tokens'' which specify what functions are to be applied to the input string. This helps us avoid any ambiguity in task-specification~\citep{shah2022goal}. 

    \item \textbf{Transformers show explosion of capabilities.} 
    We characterize the ability of a Transformer trained on our proposed setup to compositionally generalize, i.e., to apply a composition of specific functions chosen from $\mathcal{F}$, to an input string. We show that a Transformer, trained on very few compositions, can generalize to exponentially or even combinatorially many functions (\Cref{ss:explosion})---these functions are entirely ``out-of-distribution'', i.e., the model never sees them in its training data and hence was not explicitly trained to learn them. Crucially, allowing the model to recursively process its intermediate outputs---i.e., stepwise inference~\citep{kojima2022large, wei2022chain}---significantly improves compositional generalization (\Cref{ss:prompting,s:app:step_vs_direct}).

    \item \textbf{Characterizing limitations and mechanisms of compositionality in a Transformer.} 
        We formalize a notion of ``distance'' between the functions seen by the model during pretraining and the ones it is evaluated on, hence enabling a precise characterization of when the model struggles to compose (\Cref{ss:ooo}). As we show, the training data determines whether the Transformer generalizes to an exponential or combinatorial set of functions---which we call in-order and and out-of-order generalization respectively. Furthermore, linear probing~\citep{tenney2019bert, liEmergentWorldRepresentations2023}, and an analysis of the attention maps suggests the following mechanism for solving our task: the attention layer selects the task token and the fully connected layers compute the function corresponding to it (\Cref{ss:mechinterp}). We also prove the existence of Transformers that can compositionally generalize to our task and analyze why stepwise inference helps with it (\Cref{s:app:theory}). Our mechanistic analysis and theoretical construction align extremely well.
\end{itemize}

\section{Related Work}
\label{sec:related}

\paragraph{Capabilities in a Transformer.} Transformers pretrained on large-scale, web-crawled datasets have been shown to exhibit a slew of interesting capabilities, such as basic arithmetic, question answering, commonsense knowledge reasoning, stylistic transformation of a piece of text, and even multimodal reasoning~\citep{radford2018improving, radford2019language, brown2020language, bubeck2023sparks, wei2022emergent, wei2021finetuned, rae2021scaling, chowdhery2022palm, austin2021program, chen2021evaluating, bommasani2021opportunities}. 
However, this generality can come at the cost of a model also learning capabilities that are undesirable~\citep{bommasani2021opportunities, tamkin2021understanding, chan2023harms}, e.g., producing sensitive, biased, or toxic outputs~\citep{weidinger2021ethical, mcguffie2020radicalization, garrido2021survey, lin2021truthfulqa, jiang2021can, abid2021persistent, parrish2021bbq, xu2021detoxifying, huang2019reducing, sheng2019woman, gehman2020realtoxicityprompts, xu2020recipes, tamkin2021understanding}. 
This has motivated several works focused on understanding capabilities of a pretrained model, including (i) \textit{predicting} capabilities of a \textit{future} model, e.g., via fitting power laws to data/model scaling results~\citep{rae2021scaling, hoffmann2022training, hernandez2021scaling, sharma2020neural, arora2023theory} and (ii) \textit{eliciting} capabilities of a \textit{given} model, e.g., via identification of appropriate prompts or via step-wise inference protocols such as chain-of-thought, to understand what tasks a model can be reliably used for~\citep{liang2022holistic, suzgun2022challenging, lee2023holistic}. 
However, we argue that measuring a language model's performance on benchmarks to identify the existence of a set of capabilities is bound to be insufficient for characterizing what tasks it can perform:
given the compositional nature of data these models are trained on, it is possible that they learn to \textit{compose} capabilities, hence learning to perform several more tasks than we explicitly train them on. In fact, with a related motivation, \citet{yu2023skill} design a benchmark for evaluating a model's ability to combine its skills in a recent contemporary work.

\vspace*{-10pt}
\paragraph{Compositionality in neural networks.} The ability to compositionally reason has been touted as a cornerstone of human intelligence~\citep{fodor2002compositionality, fodor1988connectionism, fodor1975language, schulz2016probing}. 
Accordingly, several works have studied the ability of a neural network to compositionally generalize, usually demonstrating a negative result, and correspondingly developing explicit strategies that help improve the model's ability to generalize~\citep{livska2018memorize, hupkes2018learning, lake2018generalization, csordas2021neural, csordas2021devil, csordas2022ctl, ontanon2021making, lepori2023break, lewis2022does, yun2022vision, okawa2023compositional, hosseini2022compositional}. 
Our work differs from prior literature in several ways. 
First, we do not intend to develop protocols for improving compositional generalization in a Transformer; instead, we show that Transformers can learn to compose its capabilities and perform tasks it was never explicitly trained on, with autoregressive training on tokens from a compositional data-generating process. To this end, we define a synthetic task that allows for perfect task specification and which avoids ambiguity from prompt misspecification. While similar to the compositional table lookup task used in prior work~\citep{livska2018memorize, csordas2022ctl}, our task involves a much larger set of capabilities to train and test for (3125 or 4 million, depending on the setup, compared to 128 capabilities in prior work).
Second, we aim to understand the extent of compositional generalization in a Transformer trained on our proposed domain, i.e., what kind of compositions does the model fail to perform and when. We define a framework to precisely characterize these failures modes and use the popular linear probing protocol for understanding model internals to show the critical role of attention layers in enabling compositionality~\citep{liEmergentWorldRepresentations2023}.
Finally, we analyze the impact of step-wise inference protocols, wherein intermediate outputs generated by the model are recursively passed to it as inputs, and which has been used for solving several challenging benchmark tasks recently~\citep{suzgun2022challenging, wei2022chain}. 
Similar to our work, \citet{li2023dissecting} study step-wise inference in Transformers trained on synthetic data from a compositional data generating process. However, there are notable differences---we show that Transformers compositionally generalize to combinatorially many new functions and carefully controlling the training data allows us to highlight the benefit of step-wise inference. Furthermore, \citet{li2023transformers} study compositionality with prompts used for in-context learning~\citep{garg2022can}, while our synthetic setup avoids ambiguity in specifying the compositions.
Many other works that study whether Transformers can compositionally generalize~\citep{csordas2021devil, ontanon2021making}, focus on compositionality within a single forward pass, i.e., the model is not allowed to recursively process its inputs. 
We find the use of intermediate outputs significantly simplifies the problem and, given its popularity in practical scenarios~\citep{kojima2022large, wei2022chain}, our results serve as a demonstration that inference protocols that allow Transformers to recursively refine their outputs can lead to a wide range of capabilities, especially ones that we never explicitly train the model for. 
\vspace*{-4pt}

\section{Formalizing capabilities and compositions}
\label{sec:setup}

As noted by \citet{hupkes2020compositionality}, despite extensive work exploring compositionality in neural networks, the term is often used for several related concepts. To avoid ambiguity, we thus present a definition of a ``compositional model'' that captures our intended notion and, correspondingly, describe the data-generating process used in this work to understand Transformers' ability to compose.
Let $\mathcal{F}$ denote a set of predefined automorphisms, i.e., any given function $F$ from the set defines a map between points from its input space to the same space.
This is motivated by the fact that the input and output domain of a language model are generally the same. 
We define an \textbf{input} $x$ as a combination of two strings $[x_f, x_d]$, where $x_f \in X_{f}^{L}$ is a sequence of $L$ tokens that specify a series of $L$ functions from $\mathcal{F}$, and $x_d \in X_d^K$ denotes a sequence of $K$ tokens to which the series of $L$ functions are applied to. We refer to $x_f$ as \textbf{task tokens} and to $x_d$ as \textbf{data tokens}. For example, let $x_{F_i}$ be the identifier that denotes that function $F_i$ is applied to the data tokens and $x_{d_k}$ denote the $k^{\text{th}}$ token from the vocabulary $X_d$. Assume $L=2$ and $k=1$ and define a sample $x = \left[x_{F_1}, x_{F_2}, x_{d_1}\right]$. Then, a model $M: X_{f}^L \times X_{d}^K \mapsto X_{d}^K$ that takes $x$ as input, is expected to produce the output $F_2 \circ F_1 \left( x_{d_1} \right) $. We use $[L]$ to denote the ordered set $(1, 2, \dots, L)$.

A \textbf{capability}, in our setup, is defined as the ability of a model to accurately represent a function $F \in \mathcal{F}$.
We emphasize that we do not expect pretrained models in practice to perfectly implement an arbitrary function; however, this idealized definition affords us precision and allows us to use accuracy over a random set of inputs to claim a model possesses a certain capability. 
Based on this definition, we intend to understand the set of capabilities---or the set of functions---that a Transformer can implement by composing them. We formalize this as follows.
\begin{definition}[\textbf{Compositionality.}]\label{def:comp}
    We say a model $M(.)$ compositionally generalizes if, for any subset of functions $F_i \in \mathcal{F}$, where $i \in [L]$, $M\left(\left[x_{F_1}, x_{F_2}, \cdots x_{F_L}, x_d\right]\right) = F_{L} \circ \dots \circ F_{2} \circ F_{1}\left(x_d\right)$.
\vspace{-5pt}
\end{definition}

\begin{figure}
    \centering
    \includegraphics[width=0.9\linewidth]{./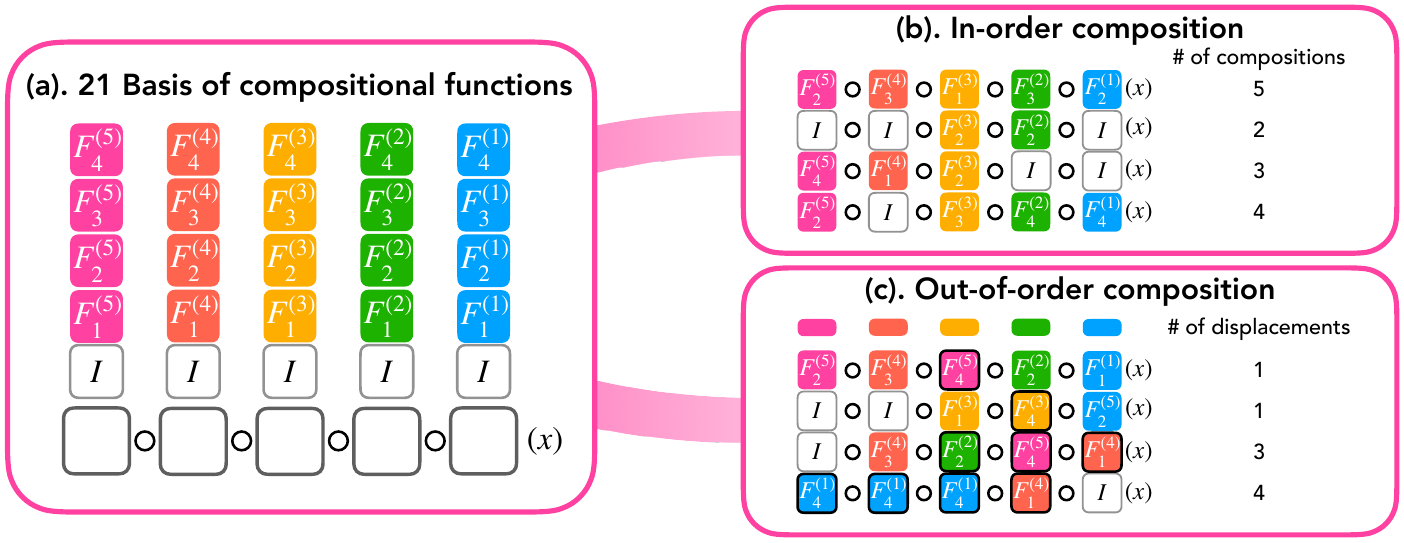}
    \caption{
        \textbf{Data generating process for in-order and out-of-order compositions.}
        (a) Each of the $L=5$ positions is associated with $N=4$ functions $f_i^{[l]}$, in addition to an identity function, resulting in a total of $5 \times 4 + 1 = 21$ basis functions for composition.
        (b) The in-order compositions select functions within the same position while (c) out-of-order compositions allow for selecting functions across positions. Each position also includes the identity function since it allows us to compute compositions of fewer than $5$ functions. In the examples presented in (c), displaced functions are surrounded by a black line, and we then count the number of displaced functions.
    }
    \label{fig:data_gen}
\end{figure}

In practical scenarios, we would not expect the pretraining data to present a capability in all possible scenarios that it can be used in. 
For example, simple arithmetic tasks like multiplication are often only seen in the context of numbers with 1--3 digits in web-crawled data~\citep{razeghi2022impact}, which leads to an inability of the model to perform multiplication in higher order numbers. 
To model this in our setup, we create a spurious correlation between a subset of the functions from $\mathcal{F}$ and the position of their identifiers in the task tokens $x_f$. 
Specifically, we define $\mathcal{F}^{(l)} \subset \mathcal{F}$ as the set of functions that are allowed at the \textbf{position} $l$ in the task tokens $x_f$.
We let $|\mathcal{F}^{(l)}| = N$ for all locations $l$, i.e., $\mathcal{F}$ is partitioned into equally sized subsets and $|\mathcal{F}| = N \times L$. 
The notation $F_{i}^{(l)}$, where $i \in [N]$ and $l \in [L]$, is used to denote the $i^{\text{th}}$ possible function at position $l$.
Based on the above, we define two ways to compose $L$ functions: \textbf{in-order} and \textbf{out-of-order} (see~\Cref{fig:data_gen}).

\begin{definition}[\textbf{In-order vs.\ out-of-order Compositions.}]
Consider the composition $\widetilde{F} = F^{(l_1)} \circ \dots \circ F^{(l_2)} \circ F^{(l_L)}\left(.\right)$, where $l_i \in [L]$. Denote the ordered set $(l_1, l_2, \dots, l_L)$ as $\mathtt{order}(\widetilde{F})$. If $\mathtt{order}(\widetilde{F})$ equals the set $[L]$, we say $\widetilde{F}$ is an \emph{in-order} composition; else, we say it is \emph{out-of-order}. 
\vspace{-5pt}
\end{definition}

Consider a model $M$ that perfectly encodes all $N \times L$ functions from the set $\mathcal{F}$. 
If the model can generalize to \textit{in-order} compositions of these functions, then its set of capabilities will in fact grow to exponentially many functions---$N^L$, to be precise. 
Further, the ability to compose \textit{out-of-order} can increase this set combinatorially, i.e., proportional to $(N \times L)^L$, growing even more quickly compared to the set of in-order compositions. 
Such an ``explosion of capabilities'' would imply that it is difficult to characterize the set of all tasks that a pretrained model can perform, especially since the pretraining data used for training a model is generally unknown and hence it is hard to even characterize what ``atomic'' capabilities the model possesses. 
In our experiments, we find that while Transformers can generalize to both in-order and out-of-order compositions, the pretraining dataset for enabling out-of-order generalization must exhibit some---albeit not huge---diversity (we quantify this further when discussing our results). To empirically characterize out-of-order compositions and discuss the failure modes thereof, we find it useful to define the following notion of \textbf{displacement} (see~\Cref{fig:data_gen}).
\begin{definition}[\textbf{Displacement.}]
    Let $D(s, s')$ denote the hamming distance between two ordered sets $s$ and $s'$. Then, the displacement of a composition $\widetilde{F}$ is defined as $D(\mathtt{order}(\widetilde{F}), [L])$.
\vspace{-5pt}
\end{definition}

\subsection{Experimental Setup and Data-Generating process}\label{ss:gen}
Having defined our notion of compositionality in a pre-trained model, we now briefly discuss the experimental setup used in this work (see~\Cref{s:app:details} for details). 
Specifically, our data-generating process yields inputs consisting of a sequence of 6 \textbf{data tokens}, $x_d \in X_d^6$, where each token is drawn from a vocabulary of size $|X_d|=10$. Each of the 6 elements are drawn uniformly at random, with replacement, from $X_d$. 
We consider \textbf{two families of functions} defined over these data tokens: bijections and permutations (see~\Cref{fig:app:function_set}). 
Specifically, the set $\mathcal{F}_b$ (which we refer to as bijections) consists of all functions that apply a bijection on each of the 6 tokens in an element-wise manner.
The number of such functions is the number of bijections on a single token: there are $10!$ such functions when $|X_d|=10$. 
The second set is $\mathcal{F}_p$, which is the set of all permutations of 6 elements ($|\mathcal{F}_p| = 6!$).
The rationale for selecting these function families is that both $\mathcal{F}_b$ and $\mathcal{F}_p$ are groups with function composition as the group operator. 
Consequently, the composition of two functions is also a group element. 

\begin{figure}[!bt]
  \centering
  \includegraphics[width=0.95\linewidth]{./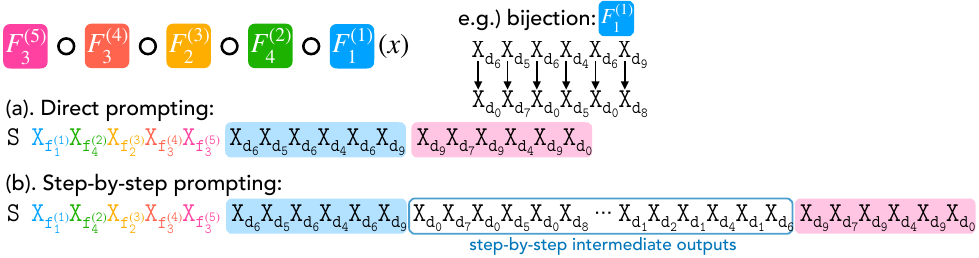}
  \caption{
        \textbf{Direct v.s. Step-by-step prompts.} The task (rainbow) and data (blue) tokens can be completed in two ways. They are followed by: \textbf{(a)} the intermediate outputs of the composition in the 
        step-by-step format or  \textbf{(b)} directly by the final result of compositions in the 
        direct format.
\label{fig:data_prompt}
\vspace*{-10pt}
}
\end{figure}

We consider two formats for representing a sample (see~\Cref{fig:data_prompt}). Both formats start with task tokens $x_f$, that specify the sequence of functions to compose, followed by the data tokens $x_d$.  The \textbf{direct prompt} format follows this with the final output of the function composition, while the \textbf{step-by-step prompt} format follows this with all intermediate outputs of the function composition, similar to chain-of-thought and related protocols~\citep{kojima2022large, nye2021show, wei2022chain}.

We also control the set of \textbf{task tokens} seen during training. In particular, we control compositions in the training data to either only contain in-order compositions, or also include out-of-order compositions.
The training data for $\random$ contains task tokens corresponding to a random subset of the set of all possible in-order compositions. 
The training data for $\base$ contains task tokens where at most one position in
the composition is not the identity function. For example, if we consider $N=4$ and $L=5$
like in~\Cref{fig:data_gen}, then $\base$ contains compositions of functions where at least four of the five
positions are identity, totalling to overall 21 functions.
The set of functions \base~helps us assess whether mere learning of ``atomic''
capabilities is sufficient to yield compositionality in a model. (See~\Cref{s:app:datagen})

We generate 100K samples using the process above for a given prompt format (step-by-step or direct) and with restrictions on the task tokens (in-order, out-of-order, \base , \random ). The model is autoregressively trained on this data using the cross-entropy loss (see~\Cref{s:app:details}). After training, we evaluate whether the model possesses a capability corresponding to a set of composition of functions, by computing the accuracy of the model completion on 1000 different data tokens. The accuracy of a completion is the average accuracy over the last 6 tokens.

\section{Results}
 \label{sec:results}
 
In this section, we systematically investigate the capabilities of an autoregressive Transformer trained
on synthetic tasks with compositional structure. Broadly, we would like to understand how
this structure in the data manifests in the network. We focus on addressing the following questions:
\begin{enumerate}[itemsep=1pt, topsep=-2pt, parsep=2pt, partopsep=2pt]
    \item Do Transformers compostionally generalize to functions not present in the training data and to what extent do they exhibit in-order and out-of-order generalization? 
    \item How do properties of the training data influence in-order and out-of-order generalization? 
    \item Are there differences between direct and step-by-step prompt formats? 
    \item Do Transformers first learn to compose fewer functions before learning to compose many of them? 
    \item What is the role of the attention and feed-forward layers? 
    \item Can another popularly used architecture for autoregressive modeling, e.g., LSTMs, compositionally generalize in our setup? 
\end{enumerate}

We use nanoGPT (\Cref{s:app:details}), a Transformer with 12 layers with each Transformer block identical to the one in~\citet{vaswani2017attention}. We use the same architecture across all our experiments in this section, but provide ablations that vary the number of layers, attention heads, and embedding dimension in \Cref{s:app:transformer_hp}.

\subsection{Combinatorial explosion and Exponential growth in capabilities}
\label{ss:explosion}

Do Transformers only generalize to functions present in the training data or do they reflect compositional structure present in data? In~\Cref{fig:explosion}, we train on data consisting of a small subset of in-order compositions of bijections $\mathcal{F}_b$, in the step-by-step prompt format. 
We consider the composition of 5 functions in both~\Cref{fig:exponential_bijection,fig:combinatorial_bijection}. Each position of the composition can be one of four choices, with the four choices at different positions being different in~\Cref{fig:exponential_bijection} and the same in~\Cref{fig:combinatorial_bijection}. In addition, any position can also be selected to be identity. 

\textbf{We find that Transformers can capture the compositional structure in data and generalize to exponential and combinatorial sets of functions in~\Cref{fig:exponential_bijection,fig:combinatorial_bijection}, despite being trained on an \textit{extremely small} subset of function compositions.} For example, a Transformer trained on 30--100 function compositions, generalizes to 3125 unseen compositions of these functions almost perfectly. 
\begin{figure}[!bt]
    \centering
    \begin{subfigure}[b]{.49\linewidth}
        \centering
        \includegraphics[width=\linewidth]{./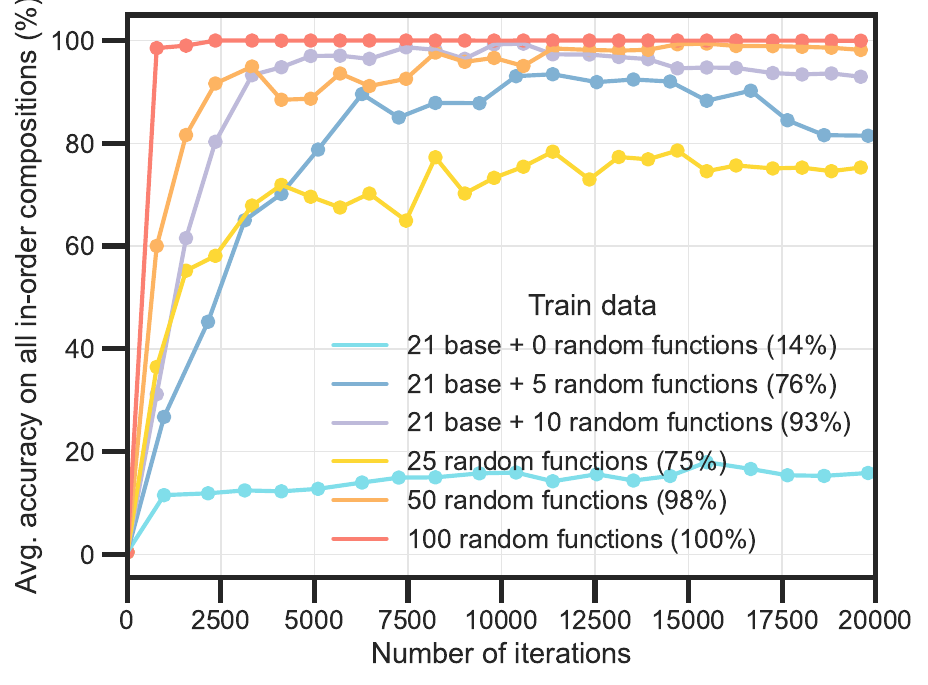}
        \vspace*{-1.5em}\caption{}\label{fig:exponential_bijection}
    \end{subfigure}
    \begin{subfigure}[b]{.49\linewidth}
        \centering
        \includegraphics[width=\linewidth]{./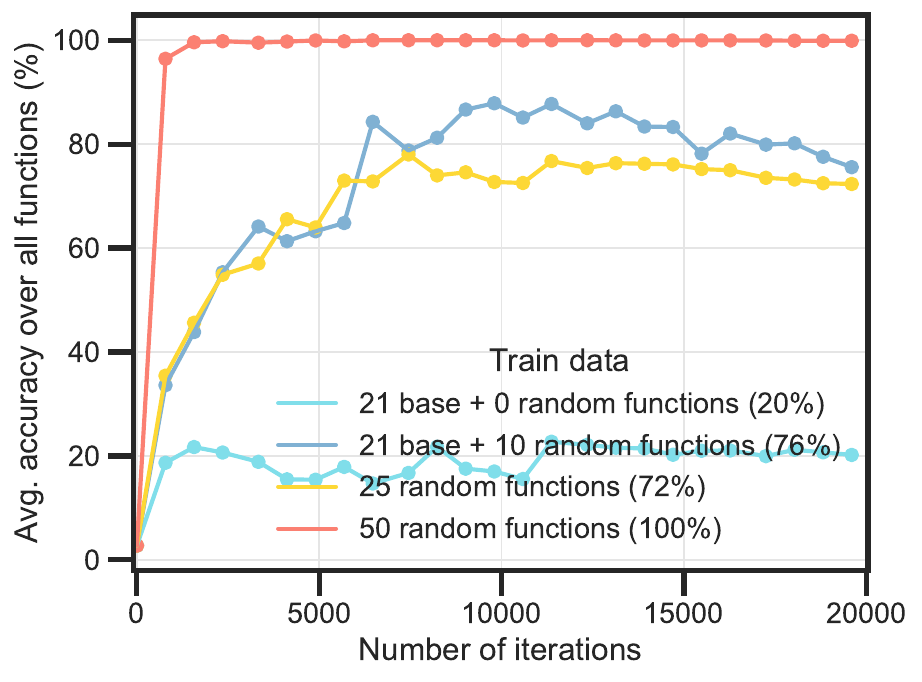}
        \vspace*{-1.5em}\caption{}\label{fig:combinatorial_bijection}
    \end{subfigure}
    \caption{\textbf{Transformers trained on the step-by-step format can generalize to an exponential (a) or combinatorial (b) number of new functions.} 
    We plot the accuracy averaged over all compositions of $L=5$ bijections, where each position of composition has 4+1 choices, with one of them being the identity function. Each curve corresponds to training data generated by a different subset of functions and the model is trained using the step-by-step prompt format. \textbf{(a)} The choice of 5 functions are different at different positions of composition---there are 21 different functions  which can be composed (in-order) in 3125 different ways. \textbf{(b)} The choice of 5 functions are identical across all 5 positions of the composition which means there are 3125 different ways to compose them; only 1365 of them are unique. Both figures are evidence that one can train on a small number of compositions of functions (around 31-100) and generalize to exponentially (a) and combinatorially (b) many functions that would be considered "out-of-distribution".
    \label{fig:explosion}}
\end{figure}
In contrast, we note that LSTMs fail to compositionally generalize in this same setup (\Cref{s:app:lstm}), while Transformers with different numbers of layers and attention heads show compositional generalization (\Cref{s:app:transformer_hp}). This indicates that the \textbf{inductive bias of the architecture contributes to compositional generalization and any autoregressive model is not guaranteed to succeed.} We also observe that \base---which serves as a null model that only trains on the atomic capabilities (or functions)---does not compositionally generalize. Overall, then, we note that compositional generalization occurs with the step-by-step prompt format, provided the right architecture and training data are used.
\begin{figure}[!th]
    \centering
    \includegraphics[width=0.99\linewidth]{./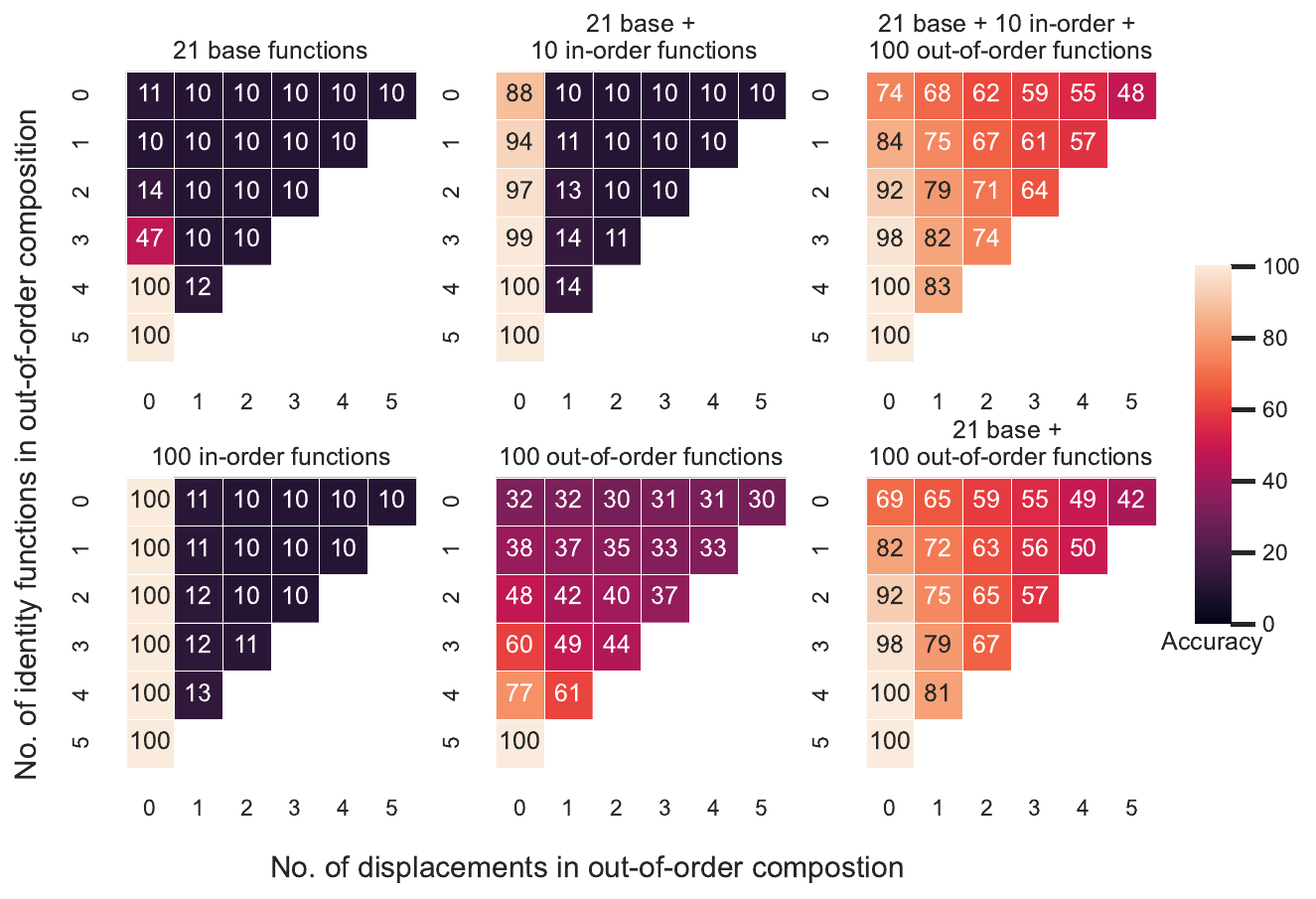}
    \caption{\textbf{The training data determines if a Transformer generalizes to an exponential (in-order generalization) or combinatorial (out-of-order generalization) number of functions.}
    Each sub-plot uses a different subset of functions (from $\mathcal{F}_b$) to generate the training data and we evaluate them on combinatorial set of functions generated from 20+1 functions (one of them being identity). The x-axis varies the number of displacements and the y-axis varies the number of compositions---equivalently the number of functions that are not identity. We make the following observations:
        (1) A Transformer trained on just 31 functions (top-middle) generalize to nearly exponentially many or 3125 compositions of functions.
        (2) All the above configurations do not generalize perfectly to the entire combinatorial set. They however partially generalize to nearly 4 million compositions of functions. The generalization is worse if we increase the number of compositions or displacements (see~\Cref{fig:data_gen} for pictorial description of displacements).
    }
    \label{fig:data_property}
\end{figure}

\subsection{In-order vs. Out-of-order generalization}\label{ss:ooo}

How do biases in the training data influence a Transformer's ability to compose? Are Transformers capable of both in-order and out-of-order generalization or does it depend on the nature of training data? For the functions in~\Cref{fig:exponential_bijection}, the number of in-order compositions is $5^5 = 3125$ and the number of out-of-order compositions is a whopping $(21)^5 = 4084101$; essentially all of these functions are different from the ones seen in the training data. Like in~\Cref{ss:explosion}, we only consider Transformers trained with the step-by-step prompt format on functions from the set of bijections $\mathcal{F}_b$.
In~\Cref{fig:data_property}, we consider the training data to have functions from \base , some in-order and some out-of-order compositions. We fail to see in-order or out-of-order generalization unless the data also includes in-order or out-of-order compositions respectively. \textbf{However, a small number of in-order (10 of them) or out-of-order compositions (100 of them) in the training data results in in-order generalization or limited out-of-order generalization.}
All scenarios in~\Cref{fig:data_property} do not fully generalize to out-of-order compositions. This indicates that out-of-order compositions may require a lot more data compared to in-order compositions.

\subsection{Direct vs. step-by-step compositions}\label{ss:prompting}

Both~\Cref{ss:explosion,ss:ooo} discuss experiments using the step-by-step prompt format, but do these results also hold for direct prompting? 
\Cref{fig:bijectionism} (left) and~\Cref{fig:bijection_direct2} answer this in the negative. 
Specifically, in~\Cref{fig:bijectionism} (left), we consider a setup identical to~\Cref{fig:exponential_bijection} and train on a different number of \random~functions. \textbf{Transformers fail to generalize to new in-order compositions with direct prompting when we consider compositions of bijections from $\mathcal{F}_b$}. We observe this failure even if we train on 2000 of the 3125 possible in-order compositions of functions, i.e., \textit{even if the data has high diversity}. In contrast, in~\Cref{fig:exponential_bijection}, a mere 100 compositions in the step-by-step format suffices to generalize to all possible in-order compositions.

\begin{figure}[!tb]
    \centering
    \begin{subfigure}[b]{.46\linewidth}
        \centering
        \includegraphics[width=\linewidth]{./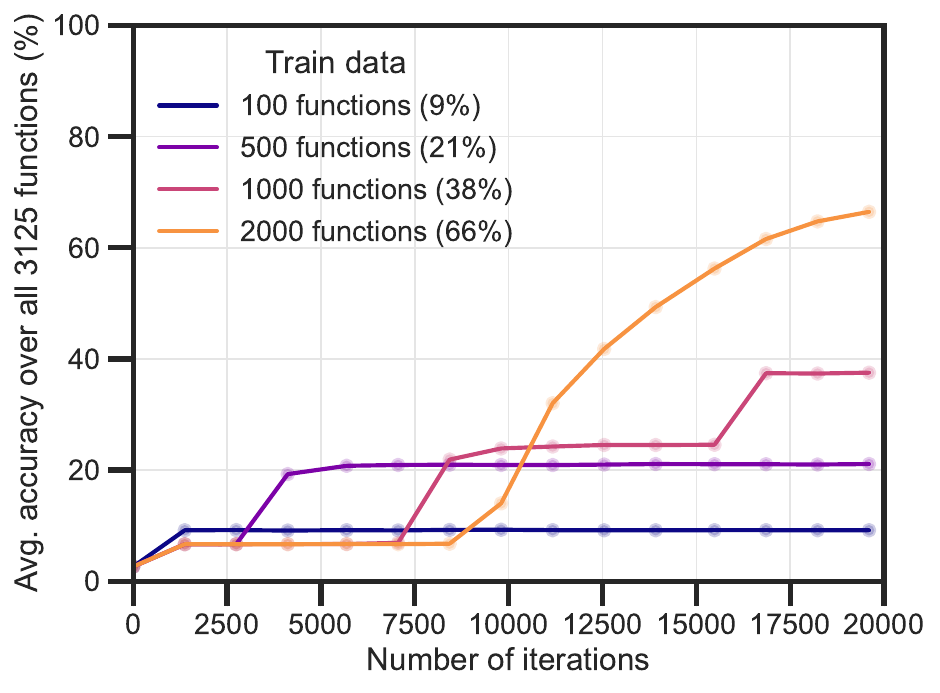}
        \vspace*{-1.0em}
    \end{subfigure}
    \begin{subfigure}[b]{.46\linewidth}
        \centering
        \includegraphics[width=\linewidth]{./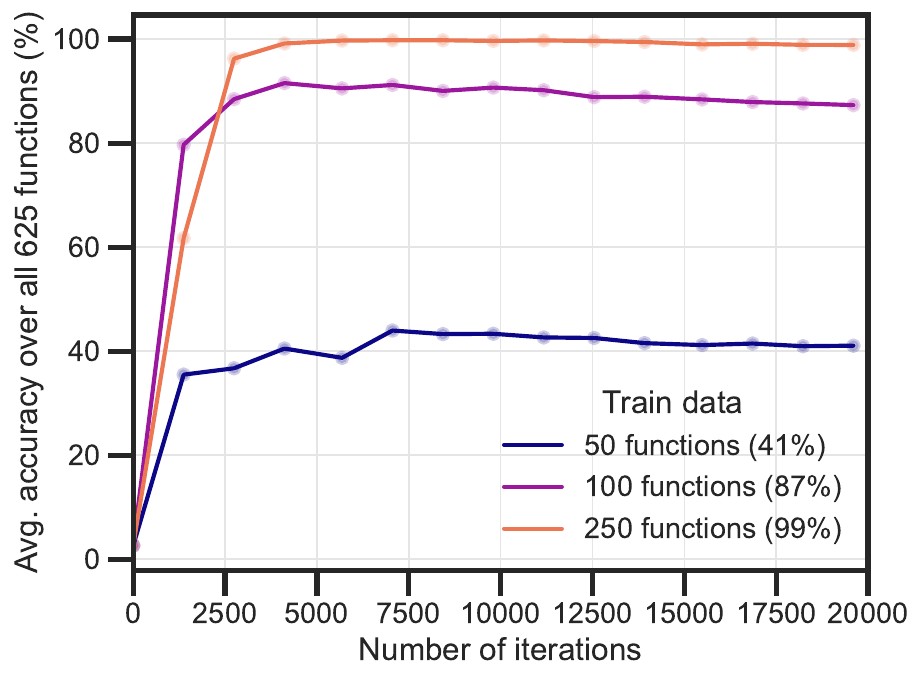}
        \vspace*{-1.0em}
    \end{subfigure}
    \caption{\textbf{Compositional generalization is less frequently seen in the direct prompt format.}
        \textbf{(Left.)} We train a Transformer using the direct prompt format on 20+1 bijections with 5 compositions with 4 choices at each position. The model fails to generalize to all 3125 compositions even if it trained on 2000 such functions. 
        \textbf{(Right.)} We train a Transformer using the direct prompt forlat on a composition of two functions, with one function being one of 25 bijections and the other function being one of 25 permutations (totalling to 625 compositions). The model is able to compose previously unseen combinations of functions when trained on 250 of these functions in this scenario.
        \label{fig:bijectionism}
    }
\end{figure}

\textbf{On the other hand, we see in-order generalization if a Transformer is trained on a composition of a a permutation function from $\mathcal{F}_p$ and a bijection function from $\mathcal{F}_b$}. In~\Cref{fig:bijectionism} (right), we train on compositions of two functions, where one position is one of 25 bijections, and the other is one of 25 permutations. We vary the number of compositions seen in the training data and find that 250 compositions in the training data are enough for the model to generalize to all 625 possible compositions of the two functions. We note that bijections and permutations operate on orthogonal features of the input: bijections operate on the value of the token while permutations operate on the position of the token. We speculate that this is important for compositional generalization in the direct prompt format.
%

\paragraph{Why is compositional generalization harder for direct prompts? (\Cref{s:app:step_vs_direct})}
The ability to run multiple forward passes through the model allows us tackle a richer class of problems~\citep{merrill2023expresssive}. The step-by-step and direct prompt formats differ because the former allows $L$ forward passes through the model, while the latter only allows one forward pass. As a result, we expect for the direct prompt format to enable compositional generalization, it must compute the $L$ steps of the composition in the intermediate layers of the model within a single forward pass itself. 
For example, consider a model that computes the functions $F$ and $G$, and is able to compositionally generalize to function $G \circ F$. Since $G \circ F$ is computed using a single forward pass, $G$ must occur in a layer after $F$ (see also \Cref{fig:decoder_incontext}). However, this model may not generalize to $F \circ G$, since that will require $F$ to occur after $G$ in it model's layers. Hence, to compositionally generalize to both combinations of $F$ and $G$, a model may have to learn copies of $F$ and $G$ at multiple layers. This will likely require training data with large amounts of data diversity so that most combinations of functions are seen by the model during training itself.

We further formalize the intuition above in Appendix~\ref{s:app:theory}. Specifically, in \Cref{s:app:step_vs_direct}, we argue that a model trained with the direct prompt format requires more compositions in the training data, by a factor of $\mathcal{O}(L)$, compared to a model trained with the step-by-step format. In \Cref{thm:direct}, we prove that there exists an $L$-layer Transformer that can compositionally generalize with direct prompting. However, empirically, we find that even with the additional training data, the direct prompt format fails to generalize in~\Cref{fig:bijectionism} (left). This is because the existence of a solution need not guarantee that a Transformer trained with gradient descent converges to that particular minima. The weights can instead converge to a minima that only memorizes compositions present in the training data. 


\subsection{Towards a mechanistic understanding}
 \label{ss:mechinterp}

In this section, we try to uncover the underlying mechanism for compositional generalization exhibited by Transformers in our setup---particularly for compositions of bijections in the step-by-step prompt format.
Prior work on mechanistic interpretability often studies smaller neural networks to extract insights for larger networks~\citep{nelson2021mathematical,wang2022interpretability,chughtai2302toy}. The rationale relates to the universaility hypothesis~\citep{li2015convergent,olah2020zoom}, which states that networks of different scales are likely to learn similar functions when trained on the same data. In line with this direction, we attempt to understand a 1-layer Transformer\footnote{In fact, we use a deeper model in most experiments in the main paper to elicit maximal performance when using the direct format; the step-by-step format, as we argue in Appendix~\ref{s:app:theory}, can generalize compositionally with fewer layers (one, for in-order generalization).} trained on our data generating process.

To develop a hypothesis for our mechanistic evaluation, we first show in \Cref{s:app:theory_step} the existence of 1-layer Transformers that can compositionally generalize to a simplified version of our task via the step-by-step prompt format. In particular, our construction uses the attention layer to copy the relevant task token---similar to an induction head~\citep{olsson2022context}---and the feed-forward layer to compute an single step of the function composition. The model is run $L$ times serially, where each run computes one step of the function composition. The attention layer uses a position encoding as the key and query to determine which tokens to attend to and propagates the task token as the value.
 
We next evaluate if the theoretical construction, even though a simplification, lines up with empirical evaluations on the actual task. Specifically, we first use linear probing to understand which layers contribute to improvements in the accuracy and then visualize the attention maps to understand which tokens the model attends to.

\textbf{Linear probe accuracy.} In~\Cref{fig:attention} (left), we use a linear probe to analyze the importance of attention layers and MLP layers. Following \citet{geva2022lm}, we fix the parameters of probe to the last linear layer, i.e., the unembedding layer of the trained model. We use a Transformer trained on 100~\random~in-order compositions of 5 functions identical to the model in~\Cref{fig:exponential_bijection}. In~\Cref{fig:app:attention_layers} we show the results of linear probe experiments on Transformers of different sizes. In Transformers of different sizes, we note a sharp increase in accuracy right after an MLP layer, i.e., the accuracy rarely increases after an attention layer.

\textbf{Visualizing attention maps.}
Analyzing the attention maps of a 12-layer Transformer for a discernible pattern can be difficult. We hence analyze the attentin maps of a 1-layer Transformer trained for step-by-step prompts, which surprisingly also exhibits in-order generalization.
In~\Cref{fig:attention} (right), we plot the attention map for a predefined composition of functions from the set $\mathcal{F}_{b}$. Keeping the task tokens to be fixed corresponding to the predefined composition, we sample 1000 data tokens and compute the attention map for the 1-layer model. The average of these maps is reported in the figure. We see that all data tokens attend to:
(i) the task token that specifies the current function to be computed and (ii) the data token that the function is to be applied to.

\textit{The results above remarkably line up with our theoretical construction.} For example, the attention maps in~\Cref{fig:attention} always attend to the relevant task tokens and data token when computing the next step of the composition. The task and data tokens are all embedded in orthogonal spaces, similar to our construction, with the exception of 5 tokens which all correspond to the the identity function (see \Cref{s:app:token_embeddings}). In parallel, the linear probe accuracy for a 1-layer Transformer in~\Cref{fig:app:attention_layers} shows no increase in accuracy after the attention layer (similar to results in Fig.~\ref{fig:attention}), but a sharp increase in accuracy occurs after the MLP layers, indicating that the function is entirely computed in the MLP layers.

\begin{figure}
\centering
\begin{subfigure}[b]{0.49\linewidth}
  \centering
  \centerline{\includegraphics[width=\columnwidth]{./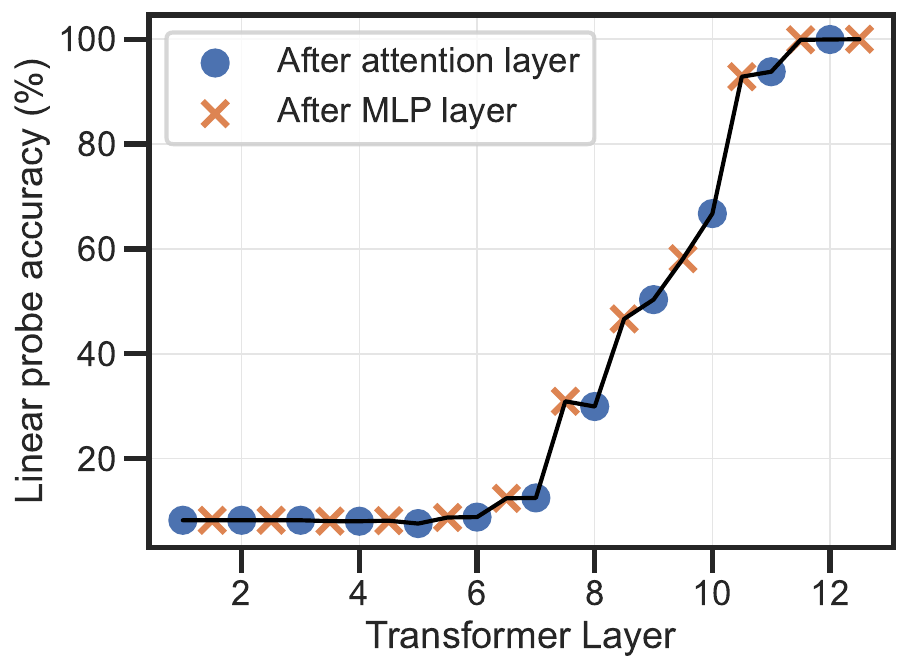}}
\end{subfigure}%
\begin{subfigure}[b]{0.49\linewidth}
  \centering
  \centerline{\includegraphics[width=\columnwidth]{./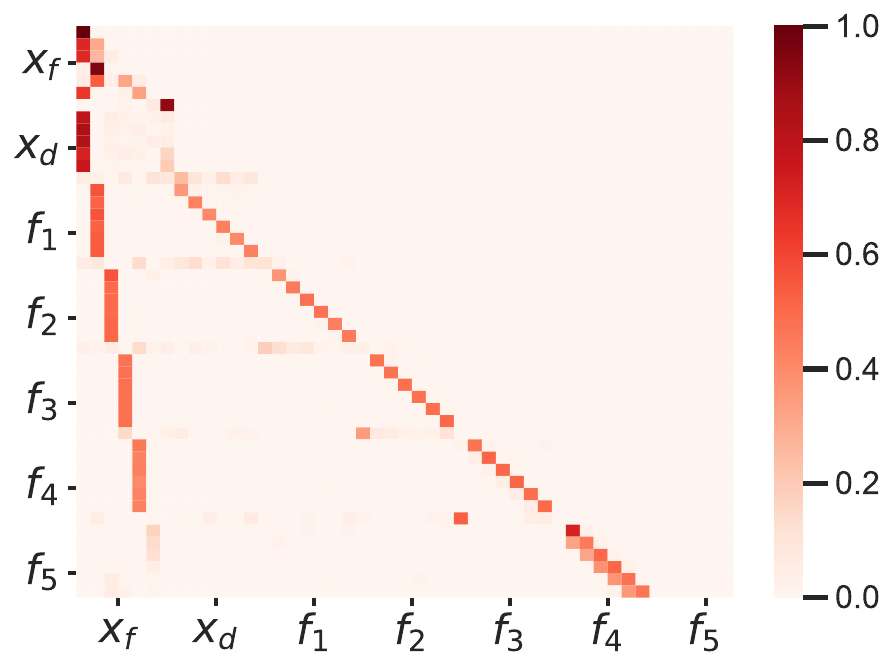}}
\end{subfigure}
\caption{\textbf{(Left.) 
Attention layer picks a function to apply given the current input, and MLP applies the selected function for Transformers trained on compositions of bijections in the step-by-step prompt format.
We see a sharp increases in accuracy after MLP layers in the last few layers of the Transformer.
} We compute the linear probe accuracy---averaged over in-order compositions of functions---after the MLP and attention layers at every layer of the model. 
\textbf{(Right.) Attention is largest at the relevant data and task token.} We plot the causal attention mask of a 1-layer Transformer trained using the step-by-step format on compositions of 5 in-order bijections (setup of~\Cref{fig:explosion}). Keeping the prompt fixed to a specific composition of functions, we plot the attention map averaged over 1000 samples. We observe that the current data token attends to the a specific task relevant to compute the next step of the composition.
}
\label{fig:attention}
\end{figure}

\subsection{Training dynamics}\label{ss:training_dynamics}
\citet{okawa2023compositional} show that different capabilities can emerge multiplicatively over the course of training, i.e., a Transformer first learns functions $F_1$ and $F_2$ before it learns compositions like $F_1 \circ F_2$. In~\Cref{fig:num_composition}, we track the accuracy over the course of training to understand if compositions of fewer functions are learned before compositions of many functions. The setup for this figure is identical to~\Cref{fig:exponential_bijection} with the accuracy faceted by the number of function compositions.
We find that the order in which functions are learned depends entirely on the training data. If the training data consists of \base~and very few in-order compositions, then a Transformer generalizes to fewer compositions (more identities) first before generalizing to compositions of multiple functions. On the other hand, if the model is trained on 25 \random~in-order compositions, then it is better at generalizing to more complex compositions of these functions; this trend is lost when we train on 50 \random~in-order compositions.

\begin{figure}[!tb]
    \centering
        \includegraphics[width=0.99\linewidth]{./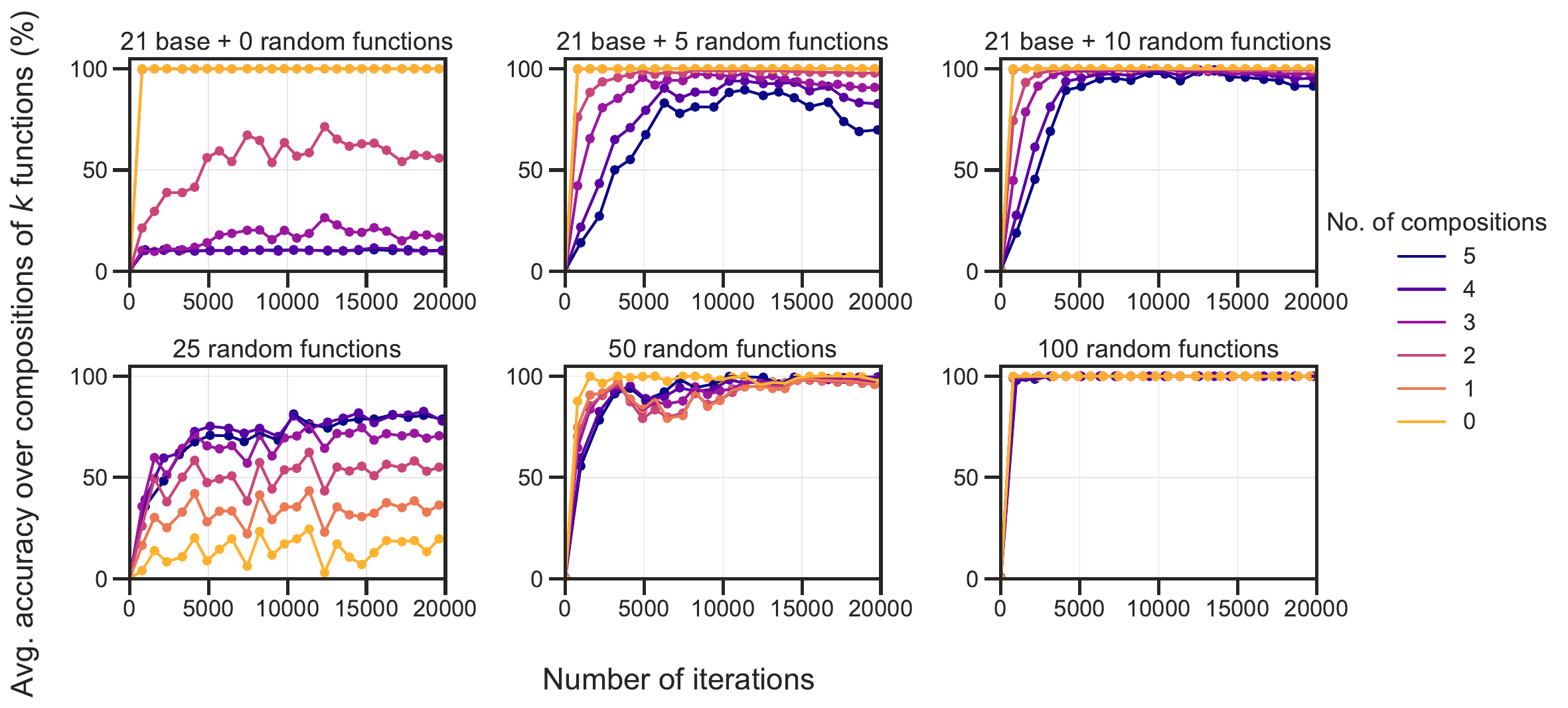}
        \caption{\textbf{
            A Transformer trained on a random subset of functions generalizes first to a composition of more functions before it generalizes to a composition of few of them.}
            Each line is the average accuracy over all composition of $k$ functions and each subplot is a Transformer trained on a different subset of functions. The \base~is trained on the individual functions and these Transformers learn to compose a smaller set of functions (more functions in composition are identity) before learning to compose many of them. The opposite is true when the model is trained on a random subset of 25 compositions of functions.
        } 
        \label{fig:num_composition}
\end{figure}

\section{Conclusion}

Given several recent works focused on prediction or elicitation of capabilities in pretrained models, we ask whether the very motivation guiding these works is tractable: can we possibly characterize all capabilities of a model, specifically a Transformer, pretrained on a compositional data domain? To address this question, we proposed a synthetic, but well-defined, data domain and formalized the notion of a capability as representing a function defined over the domain. Breaking compositional generalization into two relevant scenarios (in-order vs.\ out-of-order), we showed that the compositional structure of the data forces a model to learn to compose at relatively minimal data diversity, which indicatively address our primary question: an appropriate prompt could make the model compose its capabilities, yielding an ``explosion of capabilities''. This can arguably make tractable analysis of capabilities in a pretrained model relatively difficult.

\section*{Acknowledgements}

RR thanks Kento Nishi, Gautam Reddy and Eric Bigelow for their discussions at the early
stages of this project. RR thanks AWS AI, for their gift to Penn Engineering's ASSET Center for Trustworthy AI. ESL was
partially supported by the National Science Foundation (IIS-2008151).

\section*{Author Contributions}

ESL and RR conceived the initial project direction and defined the problem setup with with
inputs from HT and MK. The experiments were led by RR with inputs from ESL, HT and MK. The
writing of the introduction and related work was led by ESL with help from HT and RR. 
RR, ESL and HT extensively collaborated on the methods section. The results and appendix
were led by RR. The expository figures were created by HT and RR. HT and RPD acted as
advisors in the work.

\bibliography{arxiv}
\bibliographystyle{icml2024}

\newpage
\appendix
\onecolumn

\newpage
\section{Experimental Details}\label{s:app:details}

\subsection{Training methodology}

\paragraph{Transformer architecture}

\begin{wrapfigure}{r}{0.5\textwidth} 
    \vspace*{-12pt}
    \centering
    \includegraphics[width=0.99\linewidth]{./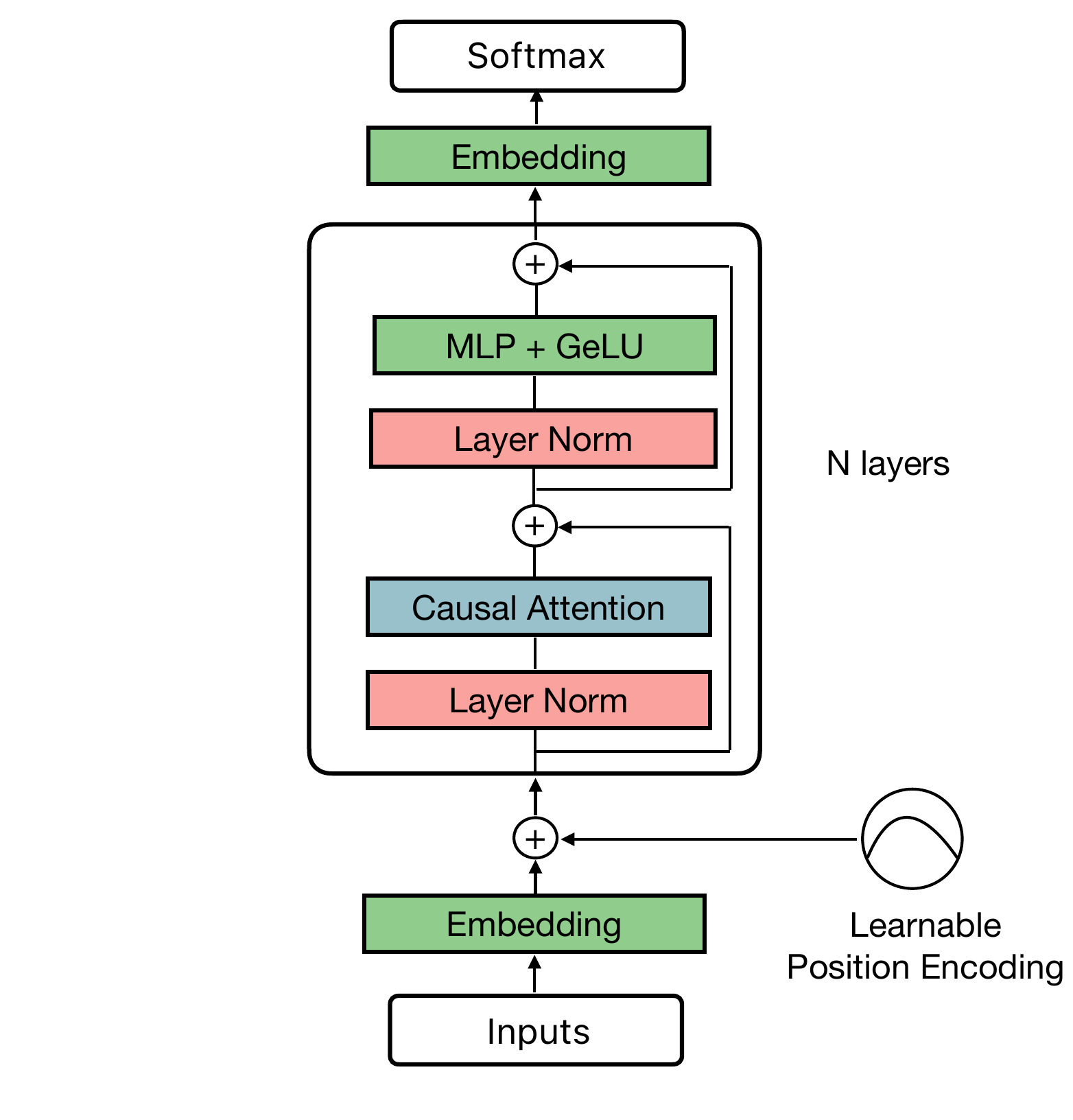}
    \caption{
    We use nanoGPT as the Transformer architecture in all our experiments. The core Transformer block is a LayerNorm, a causal attention block, followed by another layer-norm and a 2-layer multi-layer perceptron (MLP). The Transformer block has two residual connections.}
    \label{fig:app:transformer}
    \vspace*{-20pt}
\end{wrapfigure}

We use nanoGPT\footnote{https://github.com/karpathy/nanoGPT} with 12 layers, 12 attention heads and an embedding dimension of size 120. 
Each transformer block contains a causal attention layer, layer-norms, residual connections and an MLP (see~\Cref{fig:app:transformer}). The MLP contains two fully-connected layers sandwiched by a GELU layer~\citep{hendrycks2016gaussian} The first fully-connected layers has a hidden layer with size 4 times the embedding dimension (480) and the second hidden layer has a size equal to the embedding dimension (120).

The input tokens are converted to one-hot vectors before being passed through to the model. We do not use dropout or biases in the LayerNorm layers. We use weight-tying~\citep{press2016using}, i.e., the input and the output embedding layers share weights.  Finally, we make use of mixed-precision (bf16 in torch) to speed-up training.

\paragraph{Loss and Optimizer}

Models are trained using an autoregressive objective to predict the next token using the cross-entropy loss. Specifically, assume a sequence of tokens of $t$ tokens denoted by $x_{1:t}$.  Let $p_{w}(y \mid x_{1:t}$) denote the probability distribution over the next token as predicted by a model with weights $w$. For a sequence $x_{1:T}$ of length $T$, the autoregressive objective is
\[ L(w) = - \sum_{t=1}^{T-1} \log p_{w}\left(y=x_{t+1} \mid x_{1:t} \right).\]

Training is performed for 100 epochs with a cosine-annealed scheduled with warmup. We use an initial learning rate of 3e-4 annealed eventually to 6e-5. We use AdamW as the optimizer ($\beta_1=0.9$ and $\beta_2 = 0.95$) with a weight decay of $10^{-3}$ and a batch-size of 512. We also make use of gradient clipping with a magnitude of 1.

\subsection{Data generating process}\label{s:app:datagen}

\paragraph{Data and task tokens.} Both data and task tokens are converted to one-hot vectors before being fed to the Transformer. The set of data tokens is denoted by $X_d$ and the size of the vocabulary, $|X_d|$, is 10 in all our experiments. The data tokens in the input $x_d \in X_d^6$ is a sequence of $6$ tokens and is the input to the function composition. The 6 tokens are sampled uniformly at random from $X_d$ with replacement.

There are two sets of functions considered in this work. The set of functions $\mathcal{F}_b$ (which we refer to as bijections) applies a lookup table in an element-wise fashion to each of the 6 tokens in $x_d$. The set of functions in $\mathcal{F}_p$ permute the 6 tokens in $x_d$. The family of functions in $\mathcal{F}_b$ and $\mathcal{F}_p$ are described in~\Cref{fig:app:function_set}. Each function from  $\mathcal{F}_p$ and $\mathcal{X}_b$ has its own task token in $X_F$.

The input starts with a sequence of $L$ task tokens $x_f \in X_F^L$. The number of compositions is generally $L=5$, but in a few experiments like Figs.~\ref{fig:bijection_direct2},~\ref{fig:bijectionism}~(Right), $L=2$. 

\paragraph{Sampling task tokens}

The task tokens can be sampled such that they satisfy certain properties. For example, let us consider the composition of two functions---one from the set $\mathcal{F}_{1} \subset \mathcal{F}_p$ and another from $\mathcal{F}_{2} \subset \mathcal{F}_b$ (which is the setting in Fig.~\ref{fig:bijectionism}~(Right)). We can restrict the training data to compositions from the set $\mathcal{F}_{2} \circ \mathcal{F}_{1}$ which are in-order compositions (see Fig.~\ref{fig:data_gen}). Alternatively, we can also choose to include out-of-order compositions, which include compositions from $\mathcal{F}_{1} \circ \mathcal{F}_{1}, \mathcal{F}_{2} \circ \mathcal{F}_{2}$ and $\mathcal{F}_{1} \circ \mathcal{F}_{2}$. In Fig.~\ref{fig:bijectionism}~(Right), we restrict our training and evaluation to in-order compositions of functions and we observe that training on a subset of the elements from $\mathcal{F}_2 \circ \mathcal{F}_1$ suffices to compositionally generalize all functions in the set.

Two other commonly used subsets of functions are \base~and \random. Consider $\mathcal{F}_1, \mathcal{F}_2, \ldots, \mathcal{F}_5 \subset \mathcal{F}_b$. The set \random~considers $k$ functions from the set $\mathcal{F}_5 \circ \mathcal{F}_4 \circ \cdots \circ \mathcal{F}_1$ which are drawn uniformly at random. 

\base~is used to test if the compositionality is seen when the Transformer is trained on the individual functions from $\mathcal{F}_i$ for all $i \in [5]$. In the training data, all compositions have 4 of the 5 functions to be the identity function $I$, i.e it considers compositions of the form $I \circ I \circ \mathcal{F}_3 \circ I \circ I$ or $I \circ \mathcal{F}_4 \circ \cdots \circ I$. There are a total of $1 + \sum_{i=1}^5 \mathcal{F}_i$ such functions; the 1 is when all 5 functions in the composition are identity. The model is never trained on the composition of two or more functions, and at least compositions of 3 functions are necessary to generalize to all in-order compositions~\Cref{fig:max_compose}.
\begin{wrapfigure}{r}{0.6\textwidth} 
    \centering
    \includegraphics[width=0.65\linewidth]{./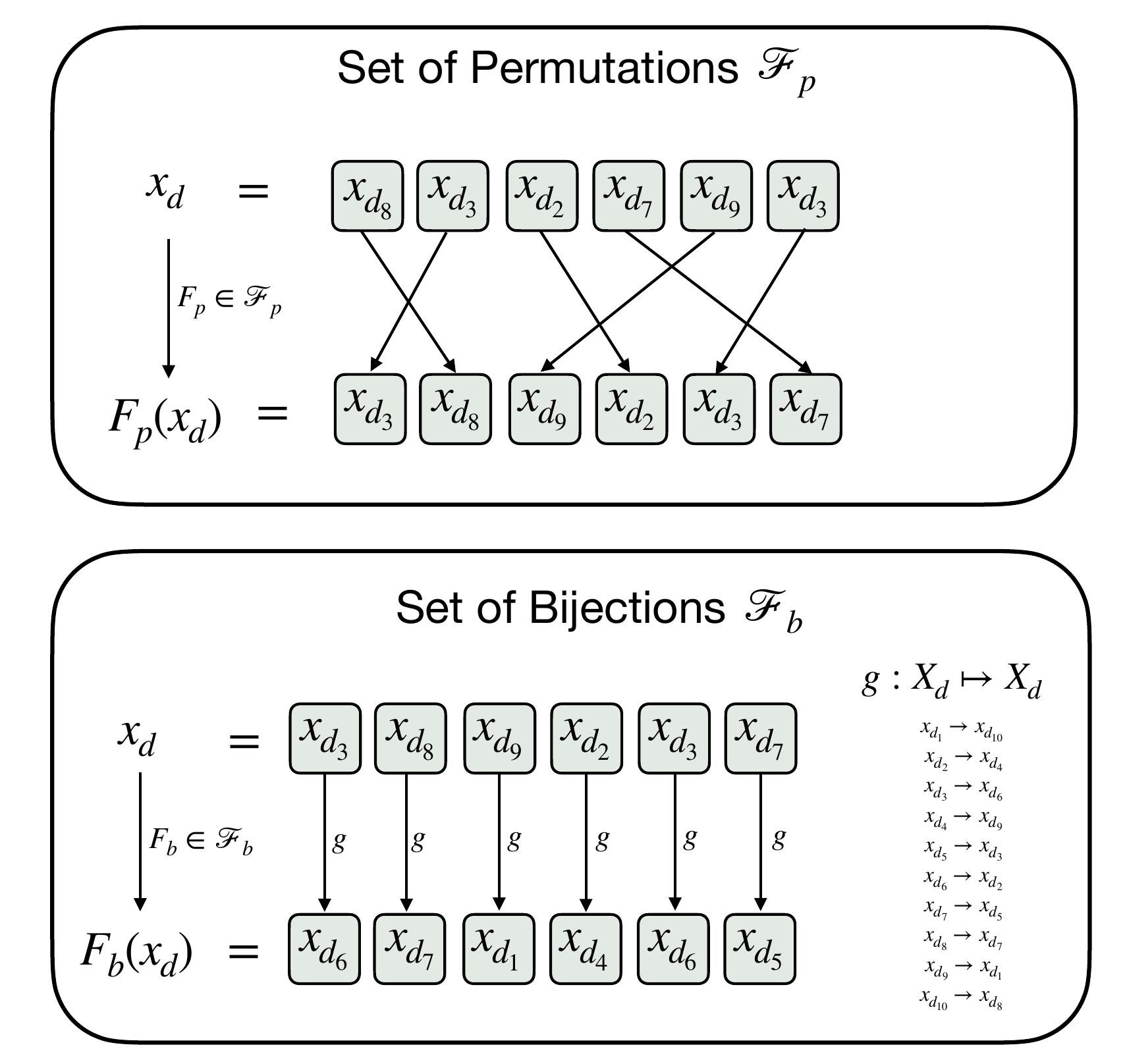}
    \caption{A permutation from $\mathcal{F}_p$ permutes the 6 tokens in the input $x_d$. A bijection from $\mathcal{F}_b$ applies a lookup table to each of the 6 tokens individually.
    }
    \label{fig:app:function_set}
    \vspace*{-30pt}
\end{wrapfigure}

\paragraph{Generating a sequence of tokens}
A sequence starts with a sequence of two task tokens $x_f = [x_{F_1}, x_{F_2}]$ followed by a sequence of data tokens $x_d$. The sequence can either be presented in: (i) The step-by-step format,
where the intermediate outputs are also included in the sequence; e.g., the sequence in the step-by-step format would look like $[x_{F_1}, x_{F_2}, x_d, F_1(x_d), F_2(F_1(x_d))]$ (see \Cref{fig:decoder_composition}) or (ii) The direct format, which does not include the intermediate outputs of the composition in the sequence and an example of such a sequence is $[x_{F_1}, x_{F_2}, x_d, F_2(F_1(x_g))]$ (see \Cref{fig:decoder_incontext}).

The step-by-step and direct formats are also discussed in~\Cref{fig:data_prompt}. The training data consists of 100,000 sequences for all experiments in one of the two formats.

\begin{figure}[!htb]
    \centering
    \begin{subfigure}[b]{.675\textwidth}
        \centering
        \includegraphics[width=0.95\textwidth]{./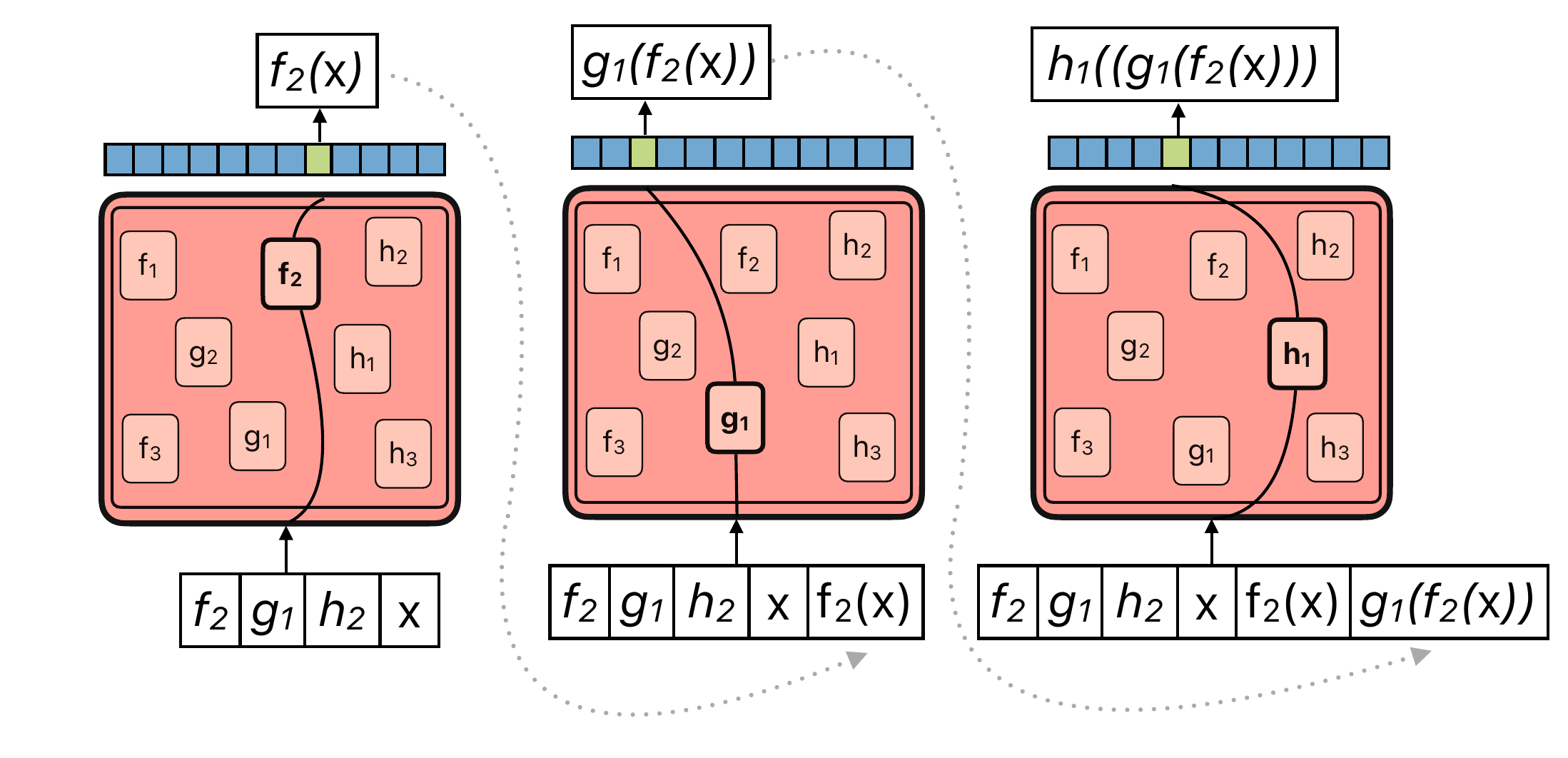}
        \vspace*{-1.0em}
        \caption{}
        \label{fig:decoder_composition}
    \end{subfigure}
    \begin{subfigure}[b]{.315\textwidth}
        \centering
        \raisebox{2.0mm}{
        \includegraphics[width=0.57\textwidth]{./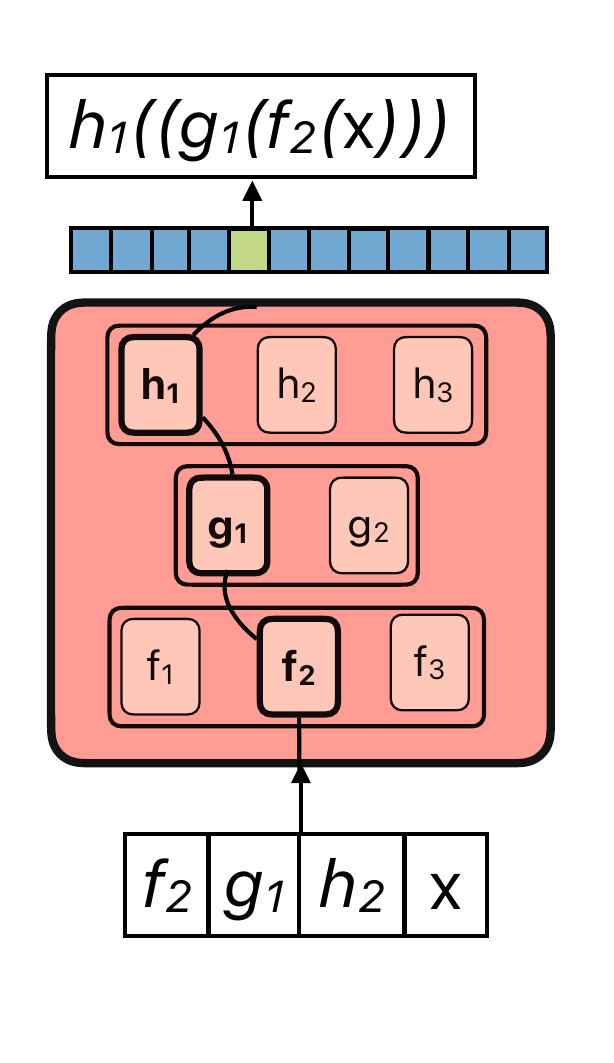}}
        \vspace*{-1.0em}
        \caption{}
        \label{fig:decoder_incontext}
    \end{subfigure}
    \label{fig:deocder}
    \vspace*{-1.5em}
    \caption{
        \textbf{Step-by-step composition v.s. Direct composition.} We test two possible routes for compositions.
        \textbf{(a)} Step-by-step prompting, which allows for generating intermediate outputs. 
        \textbf{(b)} Direct prompting, where the model must compose the functions without
the intermediate outputs.}
\end{figure}

\paragraph{Evaluating compositions}
When evaluating trained models, we evaluate on 1000 different inputs for every composition of functions. Since Fig.~\ref{fig:data_property} requires us to evaluate on a combinatorial set of functions, we sample 1000 functions (or the total number of functions, whichever was lower) for each cell which can be identified by the displacement and number of compositions; we then compute the accuracy averaged over those functions to populate the cell.
The accuracy of a completion is calculated by averaging the accuracy of the last six tokens. We see that qualitative trends do not change when we use different metrics~\Cref{fig:app:metrics}.

\paragraph{Computing linear probe accuracy}  We consider the outputs after every attention block and every MLP block (including the residual stream in both cases). We then pass these outputs through the final embedding layer and a Softmax layer to get predictions over the next token. We use these predictions to compute the accuracy at that layer. The accuracy is averaged over 1000 different input data tokens and for 200 different compositions of functions.

\section{Additional Experiments}\label{s:app:results}

\subsection{Sweeping hyper-parameters of the Transformer}\label{s:app:transformer_hp}

\begin{wrapfigure}{r}{0.5\textwidth} 
    \vspace*{-18pt}
    \centering
    \includegraphics[width=0.7\linewidth]{./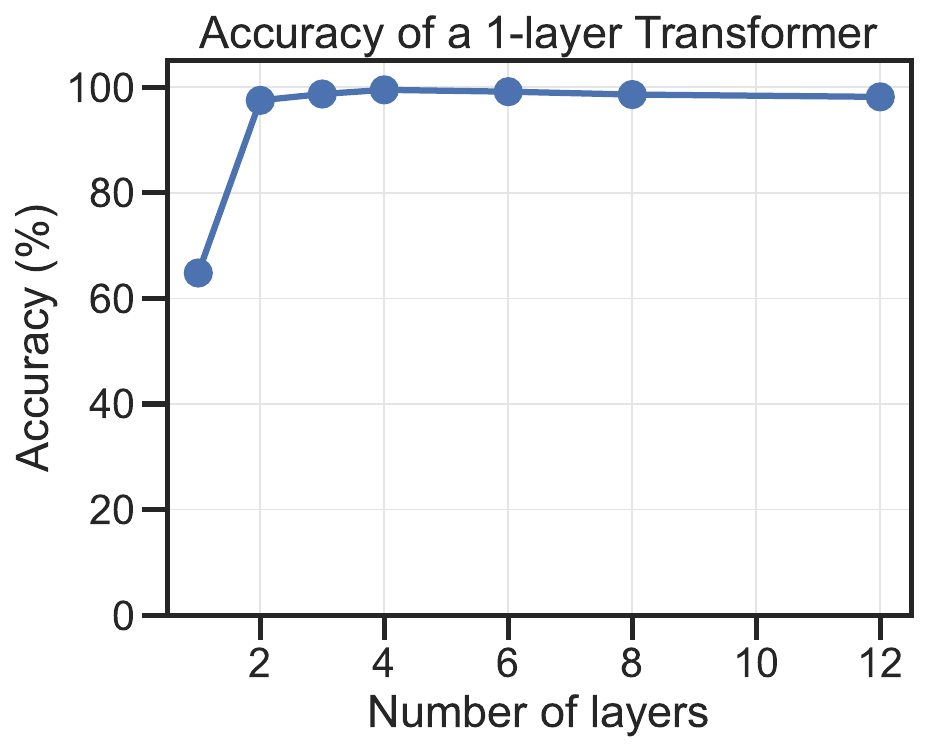}
    \caption{
    \textbf{Transformers requires at least 2-3 layers for compositional generalization with the direct prompt format.}
    We vary the number of layers in the Transformer and train on direct composition in a setup identical to~\Cref{fig:bijectionism}~(Right).
    \label{fig:app:arch_direct}
    }
    \vspace*{-10pt}
\end{wrapfigure}

We vary the number of layers, the number of attention heads, and the embedding dimension of the nanoGPT model in Fig.~\ref{fig:app:arch}. We consider a setup identical to Fig.~\ref{fig:explosion}; all models are trained on 50 \random~in-order compositions of 5 bijections. We report accuracy averaged over all 3125 in-order compositions.

We make the following observations.
\begin{enumerate*}
    \item Most surprisingly, the accuracy reduces as the number of layers become \textit{huge} for this compositional task; we expect that this is due to issues with optimization of a large depth model.
    \item The accuracy does not change with the number of attention heads for a 1-layer Transformer.
    \item The accuracy increases as we increase the embedding dimension and the model under fits the training data when the embedding dimension is too small.
\end{enumerate*}

\begin{figure}[!htb]
    \centering
    \includegraphics[width=0.32\textwidth]{./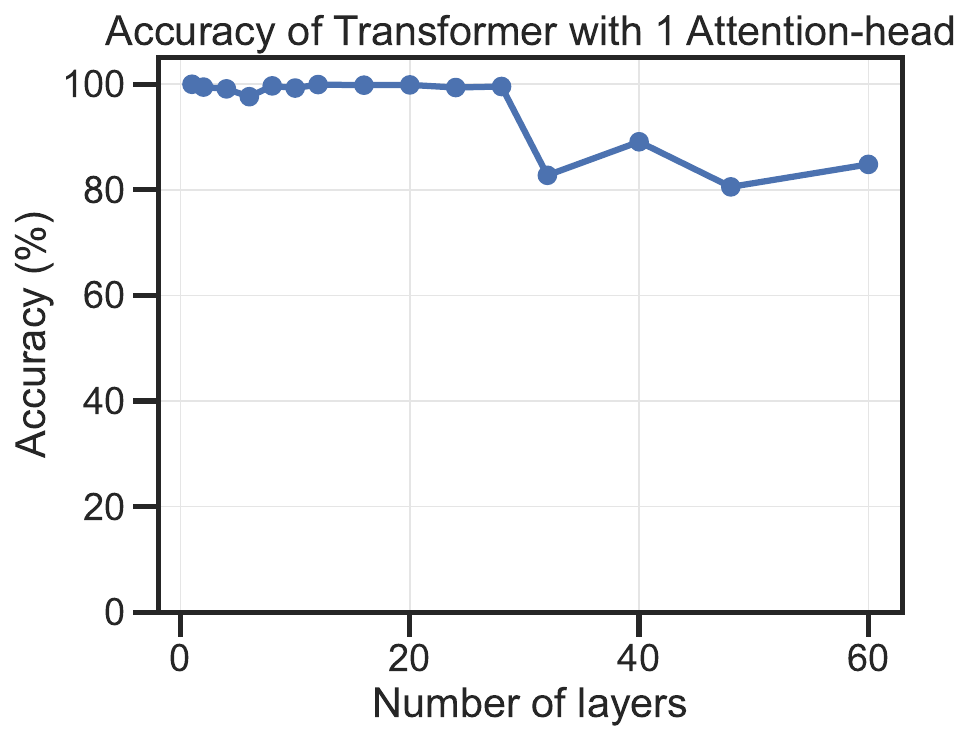}
    \includegraphics[width=0.32\textwidth]{./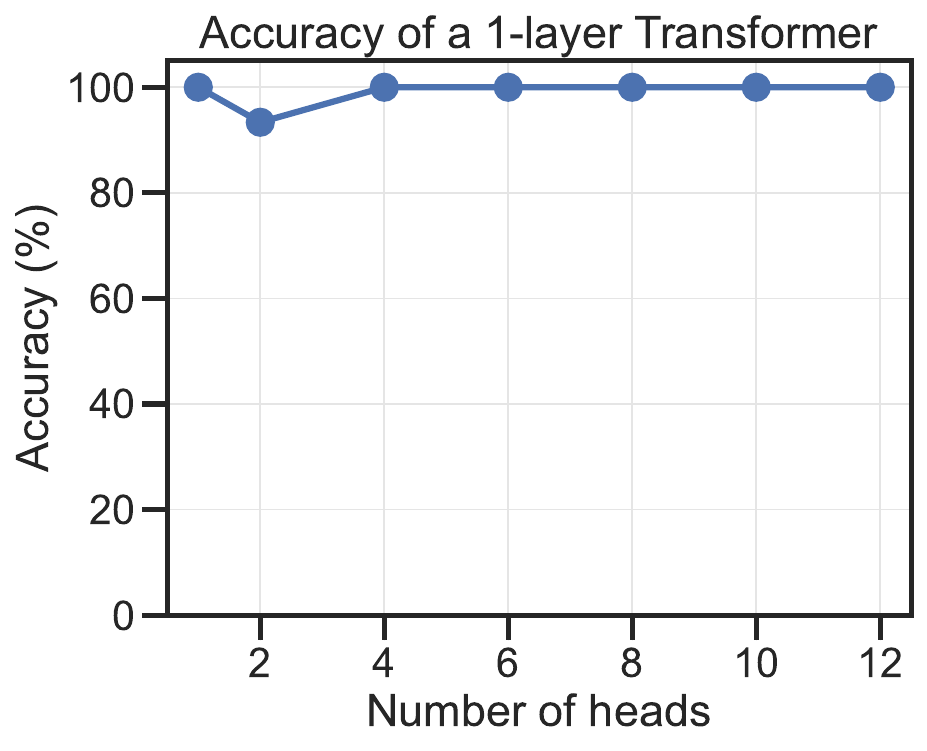}
    \caption{\textbf{We see compositionality in Transformers even if we change the number of layers and attention heads.} Compositionality is seen even in a 1-layer Transformer when trained with the step-by-step prompt format on 50 in-order compositions of bijections. However the ability to compose degrades as we increase the number of layers in the Transformer.}
    \label{fig:app:arch}
\end{figure}

\subsection{LSTMs do not learn to compose}\label{s:app:lstm}
\WFclear
We report results on autoregressively trained LSTMs using the direct prompt format from \Cref{tab:app:direct_lstm} and the step-by-step prompt format in \Cref{tab:app:lstm}. \textbf{LSTMs fail to generalize outside of the training data while Transformers generalize compositionally in both these scenarios}. \textit{This points to an inductive bias that helps Transformers trained with an autoregressive objective generalize.} Specifically, our mechanistic evaluation in Sec.~\ref{ss:mechinterp} shows this is likely attributable to the use of Attention.

The LSTMs are trained using the same data using the autoregressive objective defined in~\Cref{s:app:details}. We use the AdamW optimizer with learning rate equal to 3e-4 ($\beta_1=0.9$ and $\beta_2=0.95$), batch size of 512 and weight decay of 1e-4 for 150 epochs. As is common, we do not use a positional embedding, since the architecture is not permutation invariant. 

\begin{wraptable}{r}{0.65\textwidth} 
    \vspace*{-15pt}
    \centering
    \begin{tabular}{ c | c c}
      \toprule
       \rowcolor{gray!10} 
          & \multicolumn{2}{c}{Hidden dimension} \\
       \rowcolor{gray!10} 
      Layers & 256 & 5124 \\
      \midrule
      \midrule
      1 & 22.5 & 46.0 \\
      2 & 33.4 & 69.1 \\
      \bottomrule
    \end{tabular}
    \caption{
        \textbf{LSTMs fail to compose in the direct prompt format.}
        We train an LSTM on 250 composition of two functions (one permutation and one bijection) in the direct prompt format and tabulate the accuracy (\%); the setup is identical to~\Cref{fig:bijectionism} (Right). 
    }
    \label{tab:app:direct_lstm}
\end{wraptable}

The inputs are passed through an input embedding layer before being passed to the LSTM and the outputs of the LSTM are also passed through a linear layer which outputs the logits. In our experiments, we vary the number of stacked LSTMs (or no. of layers) and the dimension of the internal hidden vector.

Despite our attempt to train multiple different LSTMs with the best set of hyper-parameters, we observe that they do not show any compositional generalization on all our synthetic setups. This observation is further evidence for our hypothesis that the attention layers are important for compositionality.

\begin{table}[!htb]
    \centering
    \begin{tabular}{ c | c c c c }
      \toprule
       \rowcolor{gray!10} 
          & \multicolumn{4}{c}{Hidden layer dimension} \\
       \rowcolor{gray!10} 
       Layers & 120 & 256 & 512 & 1024 \\
      \midrule
      \midrule
      1 & 16.2   & 36.2 & 99.9 &  99.9 \\
      2 & 60.3   & 99.3 & 99.9 & 99.8 \\
      4 & 18.7   & 100.0 & 100.0 & 9.9 \\
      \bottomrule
    \end{tabular}
    \hspace{1.5em}
    \begin{tabular}{ c | c c c c }
      \toprule
       \rowcolor{gray!10} 
          & \multicolumn{4}{c}{Hidden layer dimension} \\
       \rowcolor{gray!10} 
       Layers & 120 & 256 & 512 & 1024 \\
      \midrule
      \midrule
      1 & 9.3  & 10.3 & 20.1 &  22.9 \\
      2 & 12.4 & 21.3 & 25.3 & 28.8 \\
      4 & 6.6  & 13.9 & 17.6 &  10.0 \\
      \bottomrule
    \end{tabular}
    \caption{\textbf{LSTMs fail to compose in the step-by-step prompt format.}
    We train autoregressive LSTMs on 50 in-order compositions of 5 bijections from $\mathcal{F}_b$ in the step-by-step format and tabulate the accuracy (\%); The setup is identical to~\Cref{fig:explosion}. We evaluate the LSTM on the \textbf{(left)} compositions seen during training and \textbf{(right)} in-order compositions not seen during training. LSTMs fail to generalize to functions outside of the training data while transformers generalize compositionally in the same setting. }
    \label{tab:app:lstm}
    
\end{table}

\subsection{Attention Masks}

\textbf{Detailed setup.} We train a 1-layer Transformer on a composition of 50 \random~in-order compositions of 5 bijections in the step-by-step prompt format. We visualize the attention masks for a fixed sequence of task tokens, averaged over 1000 different data tokens in Fig.~\ref{fig:attention}(right). We found the attention masks to be identical across different choices of the task tokens. Each row corresponds to a causal attention mask for a single token and sums up to 1. At any given row, the attention is over two elements---the task token and the intermediate output of the composition. The five contiguous blocks along the columns correspond to the five steps of composition. These preliminary results indicate that it is possible to build a complete mechanistic understanding of attention for compositional tasks (see also Sec.~\ref{s:app:theory}).

\subsection{Probing the layers in Transformers of different sizes}

\begin{figure}[!htb]
    \centering
    \includegraphics[width=0.85\textwidth]{./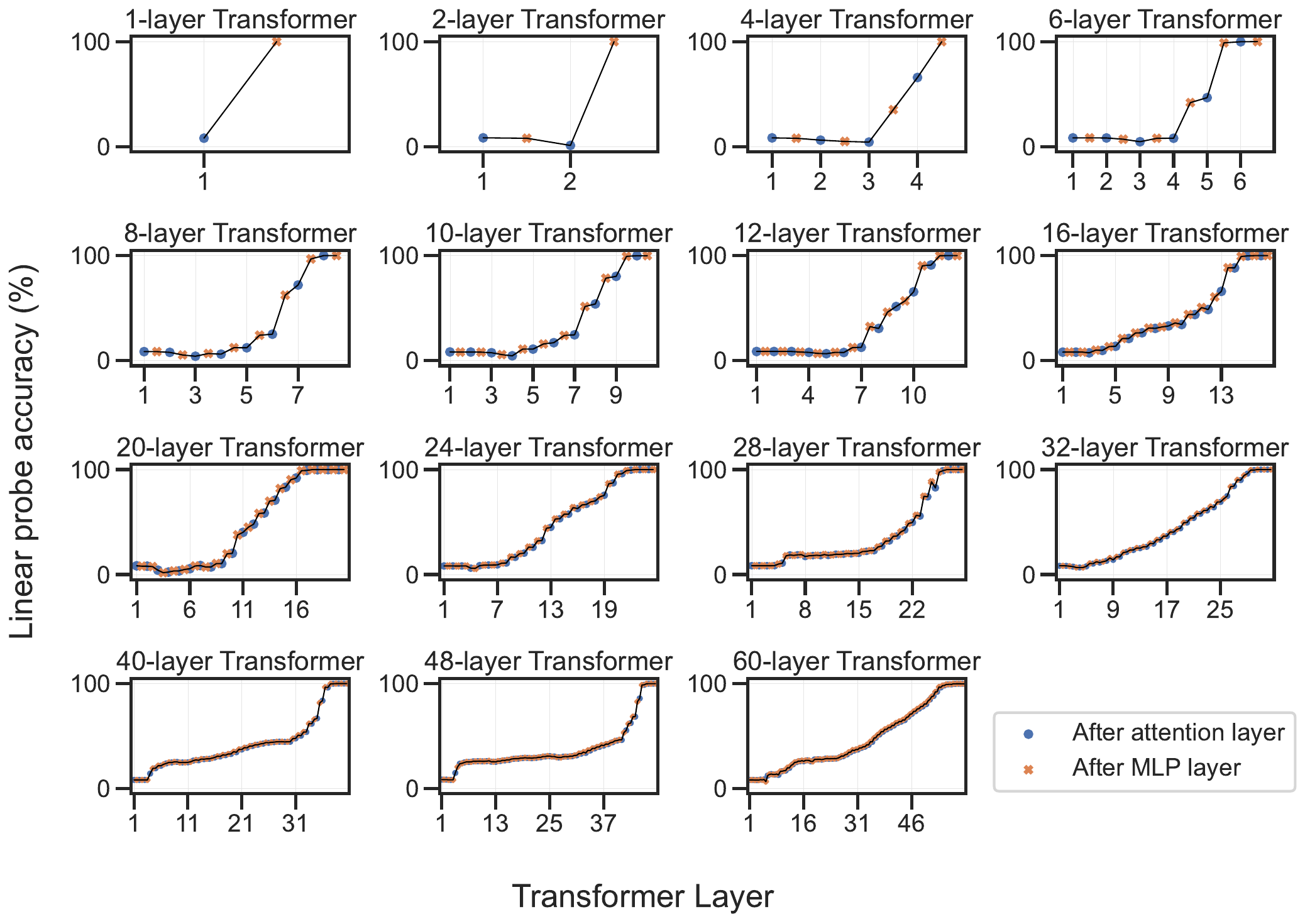}
    \caption{\textbf{We use a linear probe to study the accuracy at different layers on Transformers of different sizes.}
    Most architectures see an increasing in accuracy in the latter half of the Transformer. The increase in accuracy is more gradual for Transformers with more layers. The accuracy increases sharply after an attention layer across all architectures. 
    }
    \label{fig:app:attention_layers}
\end{figure}
In this section, we consider an experimental setup that is identical to the linear probe experiments in Fig.~\ref{fig:attention}. We compute the probe accuracies for Transformers with different number of layers in Fig.~\ref{fig:app:attention_layers}. Across all models, we observe that accuracy increases in the last few layers. Furthermore, we also observe a sharp increase in accuracy right after the MLPs in the last few layers of the transformer. 

We saw in Fig.~\ref{fig:attention}(right) that the attention masks for a 1-layer model seem to select an input and a task token to operate on at every step of the composition. We hence believe that attention has a huge role in compositionality and propose the following hypothesis:  
The probe accuracy after some MLPs see a sharp in increase in accuracy because the attention layers play a critical role in selecting the right inputs to pass to the MLP. Specifically, unlike the 1-layer model, we suspect functions are now distributed across the model layers instead of being localized in the first MLP layer. Consequently, similar to the 1-layer model, attention heads at different layers will infer if the relevant functions implemented in MLP layers in that block are part of the prompt; if so, they transfer the input data through said function.

\subsection{Another failure with the direct format with bijections}

\begin{figure}[!htb]
    \centering
    \includegraphics[width=0.70\textwidth]{./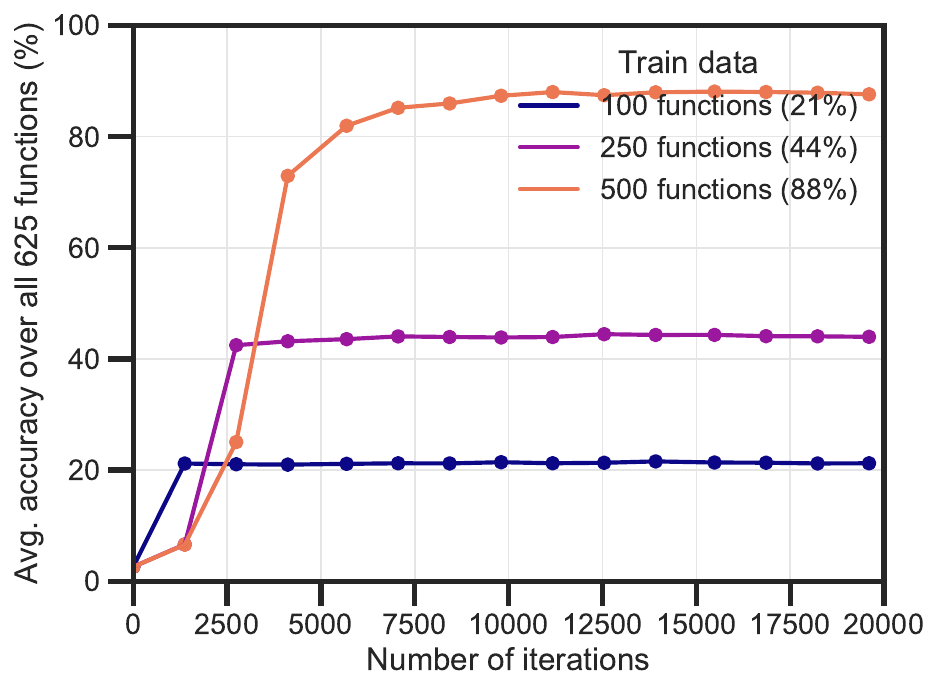}
    \caption{\textbf{Transformers fail to generalize to compositions of even 2 bijections, when trained with the direct prompt format.} The curve depicts the accuracy over all 625 in-order compositions of two bijections (25 choices for each bijection) when trained on different subsets of in-order compositions. The model is trained with direct composition. Even if we train on 500 such compositions, the model fails to generalize to the remaining 125 compositions. This is additional evidence that the model is incapable composing  bijections through direct composition.}
    \label{fig:bijection_direct2}
\end{figure}

In~\Cref{fig:bijectionism}~(Left) we show that Transformers do not learn to compose 5 bijections and only generalize to compositions in the training data. \Cref{fig:bijection_direct2} augments this result and shows that a similar failure occurs even when we consider the composition of just two bijections. Hence the model may not compose some function in the direct prompt format and the step-by-step format with an autoregressive objective is far more amenable to compositions.

\subsection{Additional experiments with training data from \random~and \base}

In this section, we conduct a collection of analyses for a model trained on in-order compositions of 5 bijections in the step-by-step prompt format. We perform the following experiments:
\begin{enumerate*}
    \item compare how \base~and \random~generalize to other in-order compositions (Fig.~\ref{fig:many_and_few}); 
    \item change the number of \random~functions in the training data (Fig.~\ref{fig:no_of_rand}); 
    \item limit the maximum number of compositions in the training data and evaluate compositional generalization (Fig.~\ref{fig:max_compose});
    \item look at alternate evaluation metrics (Fig.~\ref{fig:eval_metric}); and
    \item test if the compositions are systematic~\citep{hupkes2020compositionality} (Fig.~\ref{fig:systematicity}).
\end{enumerate*}

\begin{figure}[H]
    \centering
    \includegraphics[width=0.99\textwidth]{./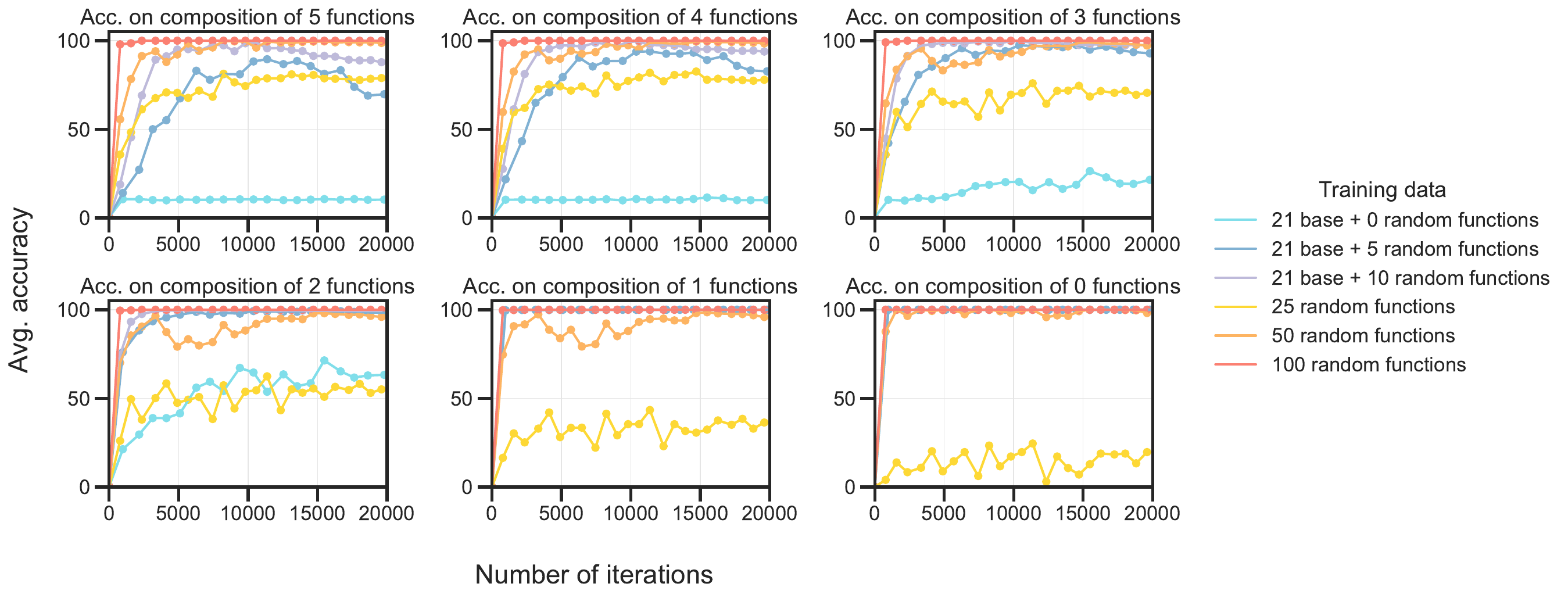}
    \caption{\textbf{How do different training datasets generalize to compositions of many and few functions?} This is a fine-grained version of~\Cref{fig:exponential_bijection}. Model trained on 50 \random~compositions generalizes poorly compositions of small number of functions while a model trained on the \base~generalizes poorly to composition of 4 or 5 functions.}
    \label{fig:many_and_few}
\end{figure}

\begin{figure}[H]
    \centering
    \includegraphics[width=0.68\textwidth]{./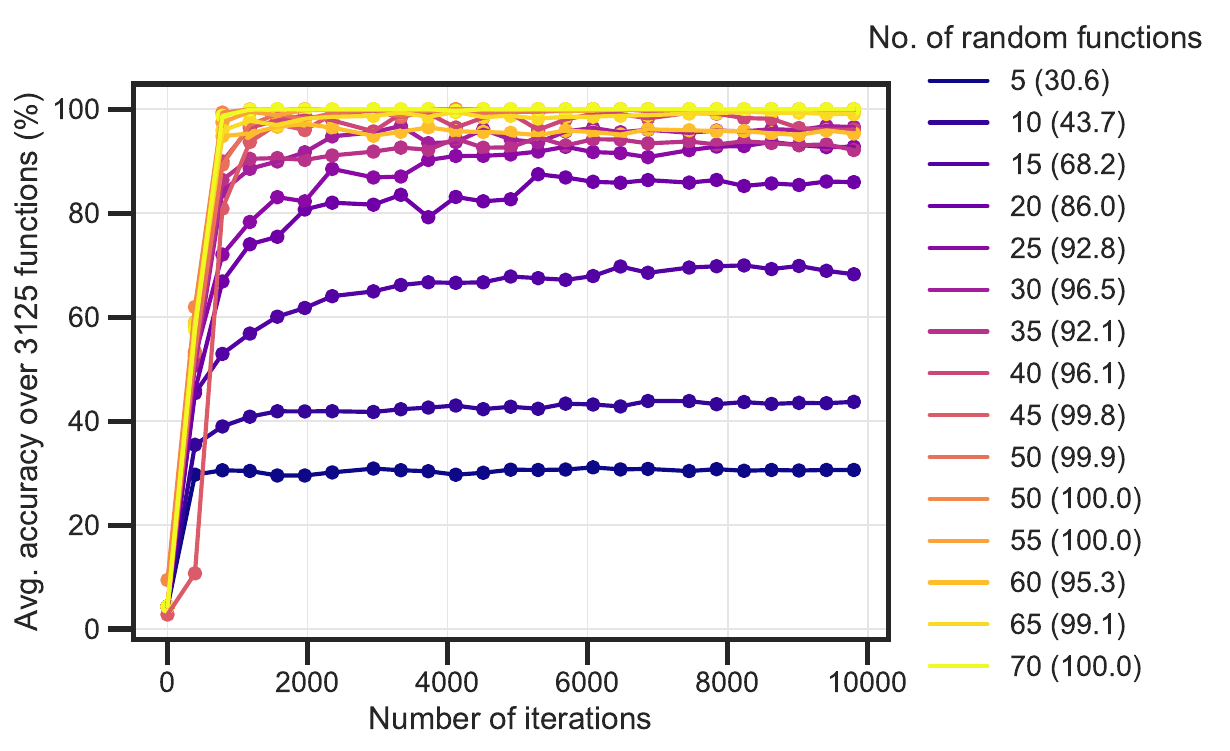}
    \caption{\label{fig:no_of_rand}\textbf{Training with different numbers of \random~ functions.}
        We train on a different number of random functions ranging from 5-70 in steps of 5. These plots are the accuracies averaged over all in-order compositions of 5 bijections over the course of training.}
\end{figure}

\begin{figure}[H]
    \centering
        \includegraphics[width=0.6\textwidth]{./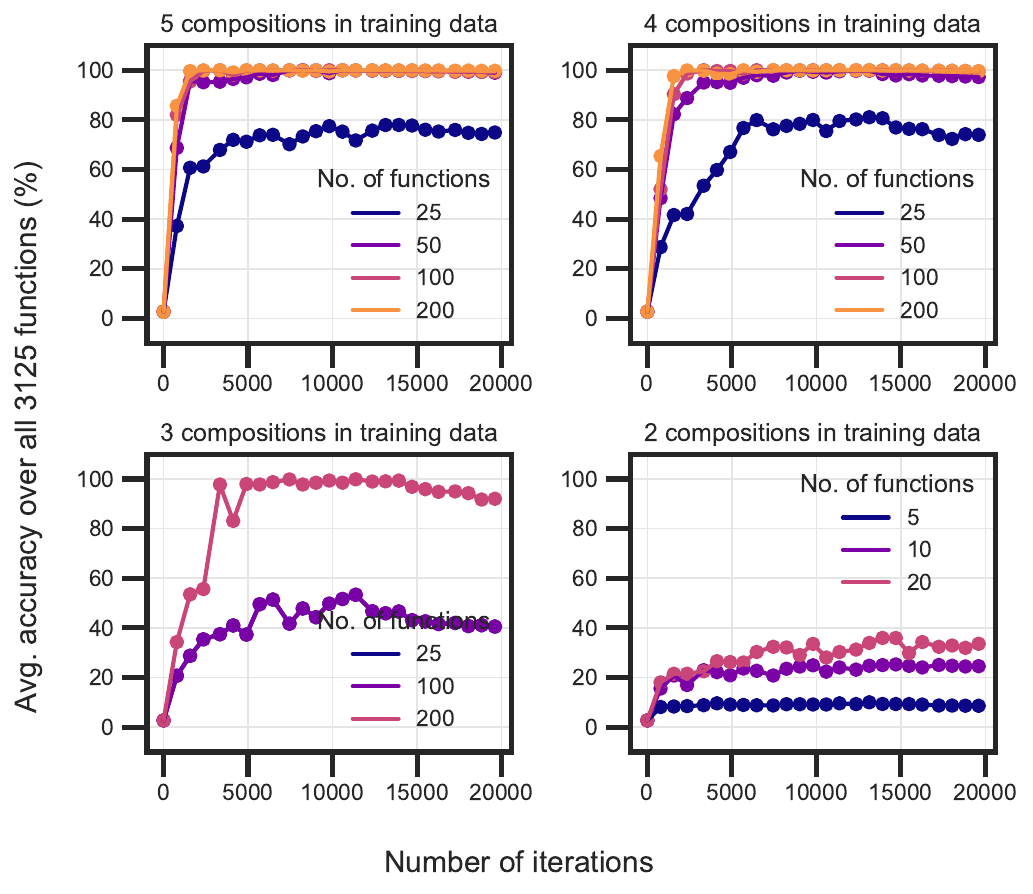}
        \caption{\label{fig:max_compose}\textbf{Limiting maximum number of compositions in the training data.}
            The figure plots the accuracy on all in-order compositions against the number of training iterations.  Each sub-plot considers compositions of size exactly 2, 3, 4, 5, respectively in the training data. The model is able to generalize to most in-order compositions only if the training data consists of compositions of size at least 3 (bottom-right).
        }
\end{figure}

\begin{figure}[H]
    \centering
        \includegraphics[width=0.95\textwidth]{./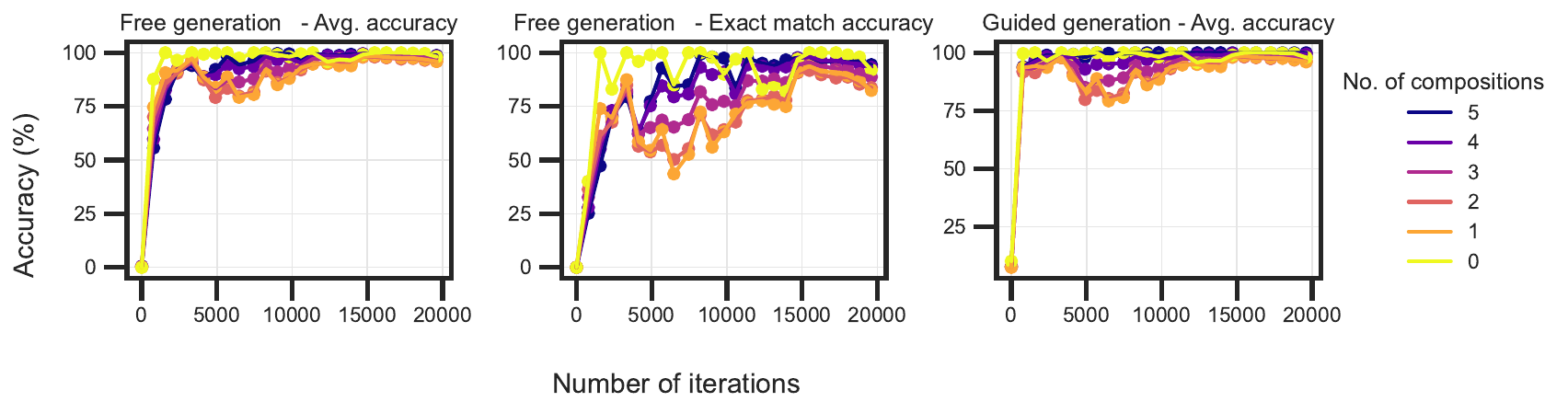}

        \includegraphics[width=0.95\textwidth]{./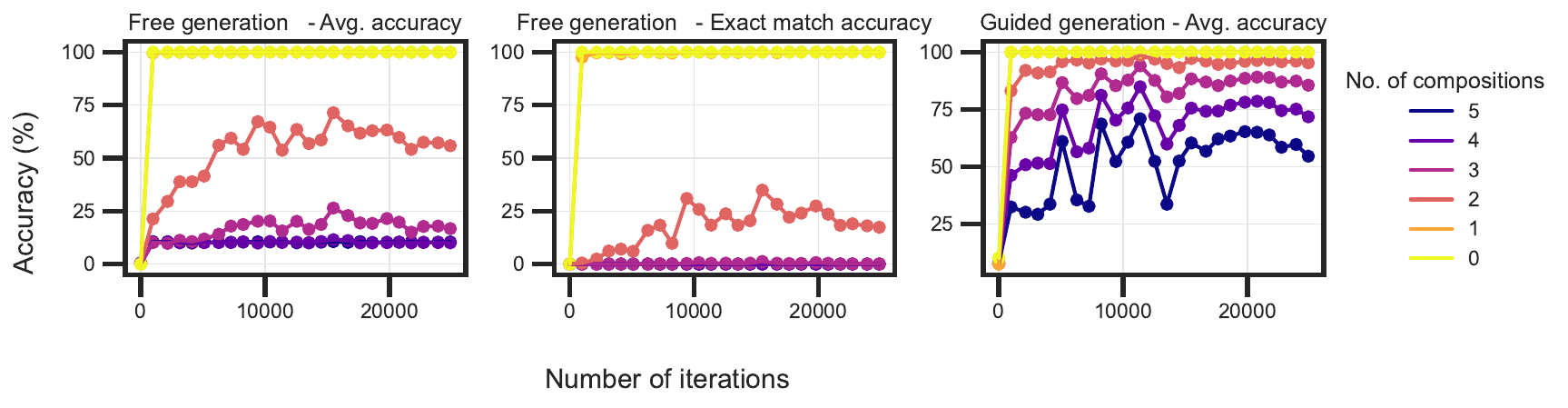}
        \caption{\label{fig:eval_metric}
            \textbf{Evaluation metric.}
            We consider 3 different metrics for evaluating the models. The left column
            considers the average accuracy when the model generates
            \textbf{The choice of metric doesn't change qualitative trends.}
            Each sub-plot considers compositions of only size 2, 3, 4, 5, respectively. In
            each plot, we vary the number of such functions that are present int he
            training data.
            \textbf{One exception is when we train on compositions of size 2.} In this
            case, the guided generation accuracy is high, but the free generation accuracy
            is not.
        }
        \label{fig:app:metrics}
\end{figure}

\begin{figure}[H]
    \centering
    \includegraphics[width=0.75\textwidth]{./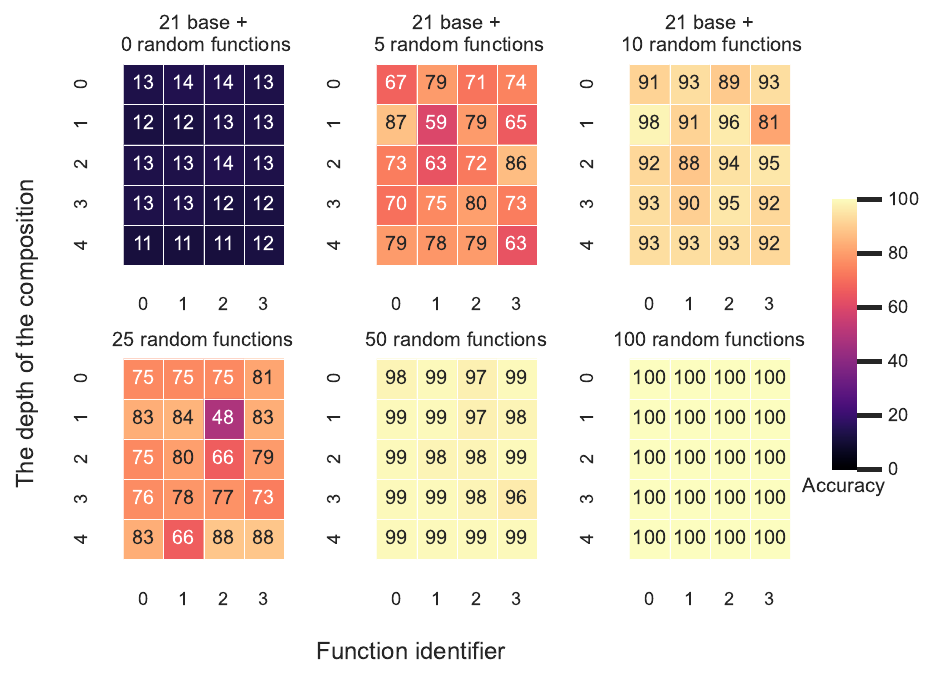}
    \caption{\textbf{Systematicity.} We consider trained models from~\Cref{fig:exponential_bijection} and analyze the accuracy of each of the 20 functions (atomic capabilities) when averaged all instances in which it was used compositionally. We breakdown the results to see if certain functions are more accurate when used in compositions compared to others and find that models seem to learn all functions equally well.}
    \label{fig:systematicity}
\end{figure}

\subsection{Token embeddings}\label{s:app:token_embeddings}

We study the token embeddings of the Transformer models and observe that they are similar for models with different number of layers and attention heads (see Fig.~\ref{fig:word_embds}). We notice a block diagonal structure that separates task tokens from the data tokens. We also observe another block diagonal structure within the task tokens which occurs when we train only on in-order compositions. 

\begin{figure}[!htb]
    \centering
    \includegraphics[width=0.4\textwidth]{./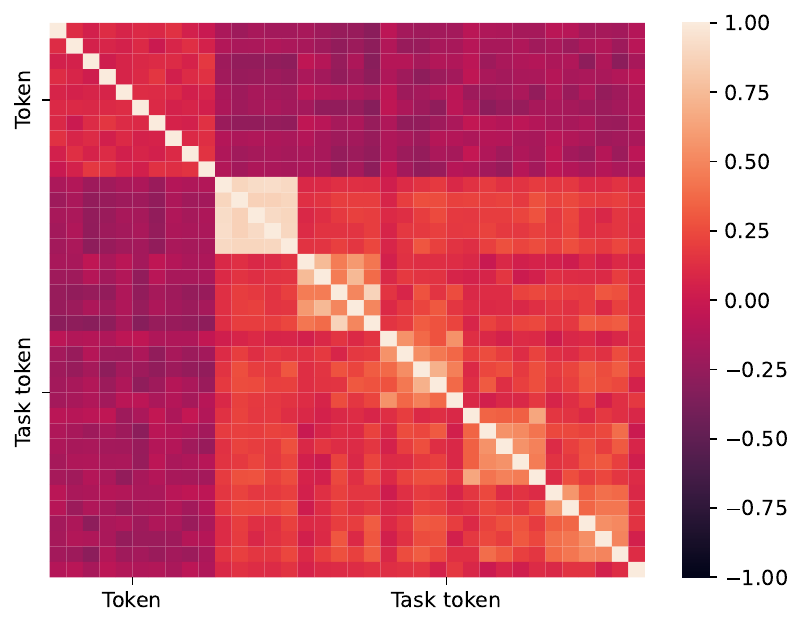}
    \caption{\label{fig:word_embds}\textbf{Word embedding correlations present a block-diagonal structure that separates data tokens from task tokens.} We plot the inner product between all pairs of word embeddings of the tokens. The task tokens are orthogonal to the set of input tokens. Different functions in the same level, i.e. $\{F_i^{(l)}\}_{i=1}^N$ for a fixed $l$, form a block-diagonal in this matrix. We observe similar word embeddings in Transformers of different sizes.}
\end{figure}

\section{Analysis of Step-by-step and Direct Prompt Formats}\label{s:app:theory}

\subsection{Transformers for the step-by-step prompt format}\label{s:app:theory_step}

We prove that there exists Transformers that can compositionally generalize in the step-by-step prompt format. Such a constructive proof, similar to~\citep{von2023transformers,ahn2023transformers,weiss2021thinking,li2023dissecting}, can be used to generate plausible mechanistic hypothesis by highlighting the role of the attention and MLP layers. While the universal approximation theorem suggests that any function can be represented by a wide enough multi-layer perceptron (MLP), the construction suggests that Transformers can represent the same function efficiently.

\paragraph {Description of the data.}
We will operate with a simplified prompt format where a composition of three functions is to be applied to a single input token. The construction can be generalized to compositions of more functions or to multiple input tokens. The input prompt $[x_{F_1}, x_{F_2}, x_{F_3}, x_d]$ has three task tokens and a single data token, and the desired output for this prompt is $[F_1(x_d), F_2 \circ F_1(x_d), F_3 \circ F_2 \circ F_1(x_d)]$. 

The position encodings $P = \begin{bmatrix} p_1 & p_2 & \cdots & p_6 \end{bmatrix}$ are learnable parameters and have dimension $d_p$, i.e., $P \in \mathbb{R}^{d_p \times 6}$. The number of input tokens is $d_v$ and the number of task tokens is $d_f$. Both input tokens $x_d$ and task tokens $x_{F_1}$ are embedded as a one-hot vector in $\mathbb{R}^{d_x}$ where $d_x = d_v + d_f$. The first $d_v$ dimensions are used to embed the data tokens and the last $d_f$ dimensions embed the task token. Henceforth, both $x_d$ and $x_{F_1}$ refer to the corresponding one-hot vectors in $\mathbb{R}^{d_x}$. For convenience, we define $d = d_x + d_p$. Tying this back to to~\cref{sec:setup}, observe that $|X_d| = d_v$ and $|X_f| = d_f$.
We denote the input to the model using $Z$, which includes the token embedding and position encoding. Specifically, we have
\[ Z = 
    \begin{bmatrix}
        x_{F_1} & x_{F_2} & x_{F_3} & x
                & F_1(x_d) & F_2 \circ F_1(x_d)  \\
        p_1 & p_2 & p_3 & p_4 & p_5 & p_6
    \end{bmatrix},
 \]
i.e., $Z \in \mathbb{R}^{d \times 6}$. We assume that the position encoding is concatenated to the token embedding as opposed to added to it.

\paragraph{Matrix notation.}
We use $\mathbbm{1}_{x_d}$ to denote a one-hot vector in the space $\mathbb{R}^{d_v}$, i.e., it excludes dimensions for the task token. On the other hand, $x_d$ denotes a one-hot vector in $\mathbb{R}^{d_x}$. We use $I_{n \times n}$ to denote an identity matrix of size $n \times n$, $1_{m \times n}$ and $0_{m \times n}$ to denote matrices of 1s and 0s of size $m \times n$, and $1_{n}$ and $0_n$ to denote matrices of size $n \times 1$.

\paragraph {Description of the architecture.}
Before describing the Transformer architecture, we first define the attention and MLP layers. We use a simplified parameterization of linear attention~\citep{ahn2023transformers} with weights $Q$ and $K$. The MLP contains two fully connected layers with a ReLU non-linearity parameterized by the weights $W_1$ and $W_2$. The attention and MLP layers are functions of $Z \in \mathbb{R}^{d \times 6}$ and are defined as:
\begin{align*}
    &\mathrm{Attn}_{Q, K}(Z) = (KZ)   (M \odot  Z^T Q Z),\,\, \text{and} \\
    &\mathrm{MLP}_{W_1, W_2}(Z) =   W_2 \mathrm{ReLU}(W_1 Z),
\end{align*}
where $Q, K \in \mathbb{R}^{d \times d}$, $W_1 \in \mathbb{R}^{d \times (d_f d_v)}$ and $W_2 \in \mathbb{R}^{(d_f d_v) \times d}$. The matrix $M \in \mathbb{R}^{6 \times 6}$ enforces causal attention and restricts the attention to inputs from previous time-steps, i.e.,
\[M = \begin{bmatrix}
    1 & 1 & 1 & \cdots & 1 \\
    0 & 1 & 1 & \cdots & 1 \\
    \vdots & \vdots & \vdots & \vdots & \vdots \\
    0 & 0 & 0 & \cdots & 1
\end{bmatrix}.\]

We consider a 1-layer Transformer with an attention layer followed by an MLP layer. We
omit layer-norm to simplify the proofs. The function computed by the Transformer is 
\[ \mathrm{Tr}_{Q, K, W_1, W_2}(Z) = \mathrm{MLP}\left(\mathrm{Attn}(Z) + Z) + \mathrm{Attn}(Z) + Z \right). \]
Henceforth, we omit the subscripts of $\mathrm{Attn}$, $\mathrm{MLP}$ and $\mathrm{Tr}$ for brevity. We include a residual connection after both the attention and MLP layers which mirrors a typical Transformer architecture~\citep{vaswani2017attention}.

The output of the Transformer is passed through an unembedding matrix $W_e$ followed by a
Softmax layer to obtain a probability distribution over the next token denoted by
\[ P(Y|Z) = \mathrm{Softmax}(W_e \mathrm{Tr}(Z)). \] 
\begin{theorem}\label{thm:step}
    There exists weights $P, Q, K, W_1, W_2$ and position encodings $P$  such
    that an Autoregressive Transformer can compositionally generalize to any prompt $[x_{F_1}, x_{F_2}, x_{F_3}, x_d]$. The values of the weights satisfy
    \begin{align*}
        &P^T P = \begin{bmatrix}
            I_{3 \times 3} & I_{3 \times 3} \\
            I_{3 \times 3 } & I_{3 \times 3} \\
        \end{bmatrix}, \qquad
        Q = \begin{bmatrix}
            0_{d \times d} & 0_{d \times d_p}  \\
            0_{d_p \times d} & I_{d_p \times d_p}  \\
        \end{bmatrix}, \qquad 
        K = \begin{bmatrix}
            0_{d_v \times d_v} & 0_{d_f \times d_v} &  0_{d \times d_p}  \\
            0_{d_f \times d_v} & I_{d_f \times d_f} &  0_{d \times d_p}  \\
            0_{d_v \times d} & 0_{d_f \times d} & 0_{d_p \times d_p} \\
        \end{bmatrix}, \\
        &W_1 = \underbrace{\begin{bmatrix}
            \mathbbm{1}_{x_{d_1}}^T & \mathbbm{1}_{x_{d_1}}^T & \mathbbm{1}_{x_{d_1}}^T & \cdots & \mathbbm{1}_{x_{d_1}}^T \\
            \mathbbm{1}_{x_{d_2}}^T & \mathbbm{1}_{x_{d_2}}^T & \mathbbm{1}_{x_{d_2}}^T & \cdots & \mathbbm{1}_{x_{d_2}}^T \\
            \vdots & \vdots & \vdots & \vdots \\
            \mathbbm{1}_{x_{d_v}}^T & \mathbbm{1}_{x_{d_v}}^T & \mathbbm{1}_{x_{d_v}}^T & \cdots & \mathbbm{1}_{x_{d_v}}^T \\
            0_{d_v}^T & -1_{d_v}^T & -1_{d_v}^T & \cdots & - 1_{d_v}^T \\
            - 1_{d_v}^T & 0_{d_v}^T &  - 1_{d_v}^T & \cdots & - 1_{d_v}^T \\
            \vdots & \vdots & \vdots & \vdots \\
            - 1_{d_v}^T & - 1_{d_v}^T & -1_{d_v}^T &  \cdots & 0_{d_v}^T \\
            0_{d_p \times d_v} & 0_{d_p \times d_v} & 0_{d_p \times d_v} & \cdots & 0_{d_p \times d_v} \\
        \end{bmatrix}^T}_{d_f \times d_v \text{columns}}, \text{ and} \qquad 
        W_2 = \begin{bmatrix}
            F_{i_1}(x_{d_1})^T - x_{d_1}^T - x_{F_{i_1}}^T \\
            F_{i_1}(x_{d_2})^T - x_{d_2}^T - x_{F_{i_1}}^T \\
            \vdots \\
            F_{i_1}(x_{d_v})^T - x_{d_v}^T - x_{F_{i_1}}^T \\
            F_{i_2}(x_{d_1})^T - x_{d_1}^T - x_{F_{i_2}}^T \\
            F_{i_2}(x_{d_2})^T - x_{d_2}^T - x_{F_{i_2}}^T \\
            \vdots \\
            F_{i_2}(x_{d_v})^T - x_{d_v}^T - x_{F_{i_2}}^T \\
            \vdots \\
            F_{i_T}(x_{d_1})^T - x_{d_1}^T - x_{F_{i_T}}^T \\
            F_{i_T}(x_{d_2})^T - x_{d_2}^T - x_{F_{i_T}}^T \\
            \vdots \\
            F_T(x_{d_v}) - x_{d_v} - x_{F_{i_T}}
        \end{bmatrix}^T. 
    \end{align*}
\end{theorem}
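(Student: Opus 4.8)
The plan is to verify that the explicit weights given in the statement make the one-layer Transformer implement one step of the composition at each position, and then to invoke the autoregressive rollout to conclude that after $L$ (here $L=3$) passes the model has emitted $F_1(x_d)$, $F_2\circ F_1(x_d)$, $F_3\circ F_2\circ F_1(x_d)$. Concretely, I would carry the construction out position by position. First I would compute the attention output $\mathrm{Attn}(Z) = (KZ)(M\odot Z^T Q Z)$. With $Q$ projecting onto the position-encoding block and $K$ projecting onto the task-token block, the inner product $Z^T Q Z$ has $(i,j)$ entry $p_i^T p_j$, and the choice $P^T P = \left[\begin{smallmatrix} I & I\\ I & I\end{smallmatrix}\right]$ (with the causal mask $M$) ensures that column $j$ of the score matrix selects exactly the task-token column whose position encoding matches the data-token position being generated — i.e., token at position $4$ attends to task token at position $1$, token at position $5$ to task token at position $2$, etc. Left-multiplying by $KZ$ then copies the corresponding one-hot task token $x_{F_i}$ into the data-token slots of the attention output. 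So after the residual connection, the vector at the relevant positions is (schematically) $x_{d} + x_{F_i}$ plus a position-encoding tail.

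Next I would push this through the MLP. The first layer $W_1$ is block-structured into $d_f$ groups of $d_v$ rows; group $k$ for data value $v$ fires (pre-ReLU value $1$) precisely when the current token equals data value $v$ and task token $k$ is present, because the $\mathbbm{1}_{x_{d_v}}^T$ part reads off the data value as $+1$ and the $-1_{d_v}^T$ off-diagonal blocks subtract $1$ for every task token other than $k$. Thus exactly one ReLU unit is active, indexed by the pair (current data value, selected task). The second layer $W_2$ maps that active unit to $F_{i}(x_{d_v})^T - x_{d_v}^T - x_{F_i}^T$; adding back the residual $x_{d_v} + x_{F_i}$ cancels the subtracted terms and leaves $F_i(x_{d_v})$, which is exactly the one-hot encoding of the next token. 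Passing through $W_e$ (the unembedding, taken compatibly with the one-hot embedding) and the softmax then places all mass on $F_i(x_{d_v})$, so the greedy/autoregressive decode appends the correct token. I would note explicitly that since $W_1$ and $W_2$ range over all task indices $i\in\{i_1,\dots,i_T\}$ and all data values, this works for any prompt, i.e., it generalizes compositionally rather than memorizing.

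Having established that one pass produces the correct next token at every generation step, I would finish by induction on the generation step: the autoregressive model appends $F_1(x_d)$ at step $1$, then with that token in context appends $F_2(F_1(x_d))$ at step $2$ (the attention now routes position $5$ to task token $2$), then $F_3(F_2(F_1(x_d)))$ at step $3$; this is precisely the desired output sequence and matches \Cref{def:comp} restricted to the simplified single-data-token, $L=3$ setting. I would close with a remark that the same construction scales to $L$ functions and $K$ data tokens by enlarging $P^TP$ to the analogous $2L\times 2L$ block-identity pattern and repeating the data-token block $K$ times.

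The main obstacle I expect is not any single hard step but bookkeeping: getting the block dimensions of $Q$, $K$, $W_1$, $W_2$ to line up consistently with the convention that the position encoding is concatenated (so $d = d_x + d_p$ and the ``$0_{d\times d_p}$'' tails must be threaded through every product), and checking that the causal mask $M$ together with $P^TP$ really singles out a unique attended column at each position — in particular that the identity-function task tokens (which the paper notes are not mutually orthogonal in the learned model) do not break the clean one-hot selection in the idealized construction. I would handle this by first doing the $d_v=d_f=1$, $L=3$ toy case fully explicitly to fix the pattern, then stating the general block computation and leaving the entrywise verification to the reader.
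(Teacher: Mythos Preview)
Your proposal is correct and follows essentially the same approach as the paper's proof: compute $\mathrm{Attn}(Z)+Z$ to show that each data-token position receives the sum of its current value and the relevant task token, then verify that the MLP with $W_1,W_2$ acts as a one-hot indicator on the pair (data value, task) followed by a lookup that, after the residual, yields $F_i(x_{d})$; the paper does this calculation in parallel across all six positions rather than as an explicit induction on generation steps, but the content is identical. One small remark: your worry about identity-function task tokens breaking orthogonality is misplaced here---that observation in the paper concerns the \emph{learned} embeddings of a trained model, whereas in this construction every task token is a distinct one-hot vector in $\mathbb{R}^{d_x}$ by fiat, so orthogonality is automatic and no extra argument is needed.
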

\begin{proof}
    See~\Cref{s:app:proof_step_by_step}.
\end{proof}

\begin{wrapfigure}{r}{0.5\textwidth}
    \vspace*{-29pt}
    \centering
    \includegraphics[width=0.6\linewidth]{./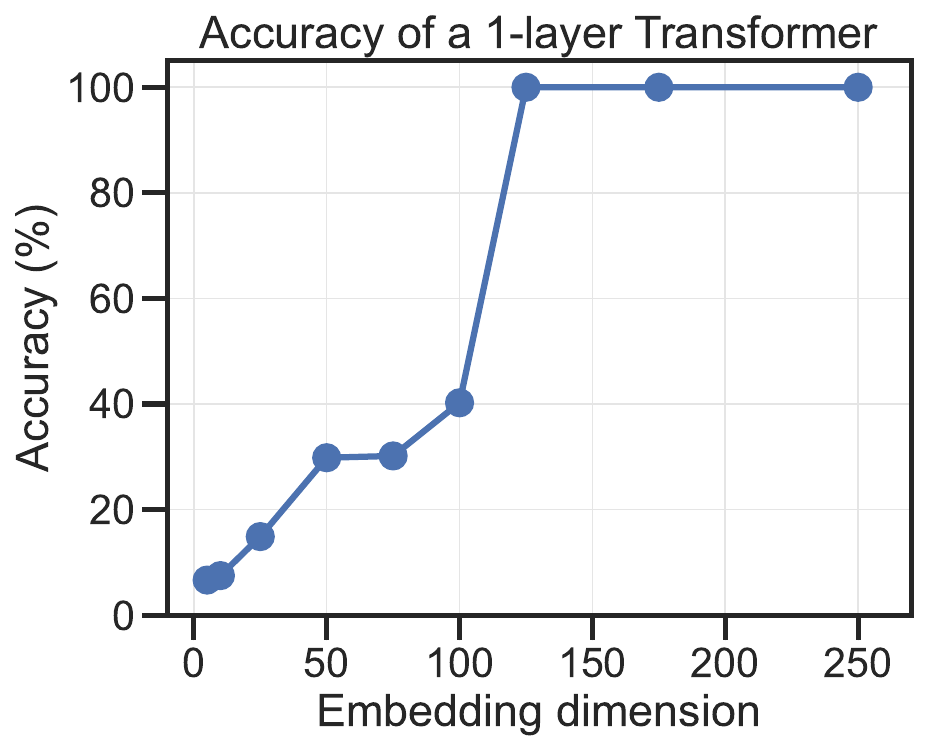}
    \caption{We see a sharp increase in accuracy as we increase the embedding
    dimension of the Transformer. The number of hidden units in the MLP of the Transformer
is 4 times the size of the embedding dimension.}
    \label{fig:app:embed}
    \vspace*{-20pt}
\end{wrapfigure}
The construction uses the attention layer to aggregate the task token and data token, i.e., attention selects the relevant task token. The query vector of the attention selects
the right task using the position encoding. The first layer of the MLP projects the summation of the task and
data tokens (present in orthogonal spaces) onto the Cartesian product of the
set of task and data tokens. The second layer computes the function and acts similar to a lookup table~\citep{geva2022lm}. 

The construction requires the output of the first fully-connected layer has size at least
$d_f d_v$ in order to encode the task and input tokens. In our experiments, we set $d_v =
10$ and $d_f=21$ and hence the number of hidden units must be at least 210. In
practice, we require at least $500$ hidden units (see Fig.~\ref{fig:app:embed}),
which is not too far from our estimate. We conjecture that the additional 
hidden units are helpful for optimization.

\subsection{Transformers for the direct prompt format}\label{s:app:theory_direct}
We also prove the existence of a Transformer for a compositions of bijections in the direct prompt format. Unlike the
step-by-step format, the direct prompt format lacks a ``scratchpad''~\citep{nye2021show} for
the intermediates outputs of the composition. In our construction, we use $K=3$ Transformer
blocks to compute the composition of $K$ functions; the output of the $k$-th block is the
result of the $k^{\text{th}}$ step of the composition.

\paragraph{Description of the data.}
We consider the composition of 3 functions with an input prompt denoted by $[x_{F_1},
x_{F_2}, x_{F_3}, x_d]$. Unlike the step-by-step format, the output is just a single token
$[F_3 \circ F_2 \circ F_1(x_d)]$. The position encodings are denoted by $P = [p_1, p_2, \dots, p_4]$
where $p_i = \begin{bmatrix} p_{i1}^T & p_{i2}^T & p_{i3}^T \end{bmatrix}^T$ and $p_i
\in \mathbb{R}^{d_p}$ and $p_{ij} \in \mathbb{R}_{d_p / 3}$. The dimensions $d_x, d_v, d$
and $d_p$ represent the same quantities. We use $\bar d_p$ to replace $\frac{d_p}{3}$.
The input to the model is
\[ Z = \begin{bmatrix}
    x_{F_1} & x_{F_2} & x_{F_3} & x_d \\
    p_{11} & p_{12} & p_{13} & p_{14} \\
    p_{21} & p_{22} & p_{23} & p_{24} \\
    p_{31} & p_{32} & p_{33} & p_{34} \\
\end{bmatrix},
\] 
where $Z \in \mathbb{R}^{d \times 4}$.

\paragraph{Description of the architecture.}
Each Transformer block is defined similar to the step-by-step format, i.e.,
\[ \mathrm{Block}_{Q_i, K_i, W_{i1}, W_{i2}}(Z) = \mathrm{MLP}_i (\mathrm{Attn}_i(Z) + Z)
    + (\mathrm{Attn}_i(Z) + Z), \] 
which we henceforth denote by $\mathrm{Block}_i(Z)$. Unlike the step-by-step format, the model is now composed of $3$
blocks corresponding to the 3 steps of the compositional task the model is expected to solve, i.e., 
\[ \mathrm{Tr}(Z) = \mathrm{Block}_3(\mathrm{Block}_2(\mathrm{Block}_1(Z))). \]
This input is passed through a Softmax layer to predict a probability distribution over the next
token, denoted by $P(Y \mid Z) = \mathrm{Softmax}(W_e \mathrm{Tr}(Z))$.

\begin{theorem}\label{thm:direct}
    There exist weights $P_i, Q_i, K_i, W_{1i}, W_{2i}$ for $i \in [1, 3]$
    and position encodings $P$  such that the a 3-layer Transformer can compositionally generalize to any prompt of the form $[x_{F_1}, x_{F_2}, x_{F_3},
    x_d]$. The values of the weights satisfy
    \begin{align*}
        &Q_1 = \begin{bmatrix}
            0_{d \times d} & 0_{d \times \bar d_p} & 0_{d \times \bar d_p} & 0_{d \times
            \bar d_p} \\
            0_{\bar d_p \times d} & I_{\bar d_p} & 0_{\bar d_p \times \bar d_p} & 0_{\bar d_p \times \bar d_p} \\
            0_{\bar d_p \times d} & 0_{\bar d_p \times \bar d_p} & 0_{\bar d_p \times \bar
            d_p} & 0_{\bar d_p \times \bar d_p} \\
            0_{\bar d_p \times d} & 0_{\bar d_p \times \bar d_p} & 0_{\bar d_p \times \bar d_p} & 0_{\bar d_p \times \bar d_p} 
        \end{bmatrix}, \qquad 
        Q_2 = \begin{bmatrix}
            0_{d \times d} & 0_{d \times \bar d_p} & 0_{d \times \bar d_p} & 0_{d \times
            \bar d_p} \\
            0_{\bar d_p \times d} & 0_{\bar d_p \times \bar d_p} & 0_{\bar d_p \times \bar d_p} & 0_{\bar d_p \times \bar d_p} \\
            0_{\bar d_p \times d} & 0_{\bar d_p \times \bar d_p} & I_{\bar d_p} & 0_{\bar d_p \times \bar d_p} \\
            0_{\bar d_p \times d} & 0_{\bar d_p \times \bar d_p} & 0_{\bar d_p \times \bar
            d_p} & 0_{\bar d_p \times \bar d_p}
        \end{bmatrix},  \\
        &Q_3 = \begin{bmatrix}
            0_{d \times d} & 0_{d \times \bar d_p} & 0_{d \times \bar d_p} & 0_{d \times
            \bar d_p} \\
            0_{\bar d_p \times d} & 0_{\bar d_p \times \bar d_p} & 0_{\bar d_p \times \bar d_p} & 0_{\bar d_p \times \bar d_p} \\
            0_{\bar d_p \times d} & 0_{\bar d_p \times \bar d_p} & 0_{\bar d_p \times \bar
            d_p} & 0_{\bar d_p \times \bar d_p} \\
            0_{\bar d_p \times d} & 0_{\bar d_p \times \bar d_p} & 0_{\bar d_p \times \bar
            d_p} & I_{\bar d_p}, \quad
        \end{bmatrix}, \qquad 
        K_1 = \begin{bmatrix}
            0_{d_v \times d_v} & 0_{d_f \times d_v} &  0_{d \times d_p}  \\
            0_{d_f \times d_v} & I_{d_f} &  0_{d \times d_p}  \\
            0_{d_v \times d} & 0_{d_f \times d} & 0_{d_p \times d_p} \\
        \end{bmatrix}, \\
        &\qquad \qquad \qquad \qquad \qquad \qquad K_2 = \frac{K_1}{2}, \qquad
        K_3 = \frac{K_1}{3}, \\
    \end{align*}
    \begin{align*}
        &P_1^T P_1 =\begin{bmatrix}
            1 & 0 & 0 & 1 \\
            0 & 1 & 0 & 0 \\
            0 & 0 & 1 & 0 \\
            1 & 0 & 0 & 1 
        \end{bmatrix}, \qquad 
        P_2^T P_2 = \begin{bmatrix}
            1 & 0 & 0 & 0 \\
            0 & 1 & 0 & 1 \\
            0 & 0 & 1 & 0 \\
            0 & 1 & 0 & 1 \\
        \end{bmatrix}, \qquad
        P_3^T P_3 =  \begin{bmatrix}
            1 & 0 & 0 & 0 \\
            0 & 1 & 0 & 0 \\
            0 & 0 & 1 & 1 \\
            0 & 0 & 1 & 1
        \end{bmatrix},
    \end{align*}
    \begin{align*}
        &W_{11} = \underbrace{\begin{bmatrix}
            \mathbbm{1}_{x_{d_1}}^T & \mathbbm{1}_{x_{d_1}}^T & \mathbbm{1}_{x_{d_1}}^T & \cdots & \mathbbm{1}_{x_{d_1}}^T \\
            \mathbbm{1}_{x_{d_2}}^T & \mathbbm{1}_{x_{d_2}}^T & \mathbbm{1}_{x_{d_2}}^T & \cdots & \mathbbm{1}_{x_{d_2}}^T \\
            \vdots & \vdots & \vdots & \vdots \\
            \mathbbm{1}_{x_{d_v}}^T & \mathbbm{1}_{x_{d_v}}^T & \mathbbm{1}_{x_{d_v}}^T & \cdots & \mathbbm{1}_{x_{d_v}}^T \\
            0_{1 \times d_v} & -1_{1 \times d_v} & -1_{1 \times d_v} & \cdots & - 1_{1 \times d_v} \\
            - 1_{1 \times d_v} & 0_{1 \times d_v} &  - 1_{1 \times d_v} & \cdots & - 1_{1 \times d_v} \\
            \vdots & \vdots & \vdots & \vdots \\
            - 1_{1 \times d_v} & - 1_{1 \times d_v} & -1_{1 \times d_v} &  \cdots & 0_{1 \times d_v} \\
            0_{d_p \times d_v} & 0_{d_p \times d_v} & 0_{d_p \times d_v} & \cdots & 0_{d_p \times d_v} \\
        \end{bmatrix}^T}_{d_f \times d_v \text{columns}}, \qquad 
        W_{12} = \begin{bmatrix}
            F_{i_1}(x_{d_1})^T - x_{d_1}^T - x_{F_{i_1}}^T \\
            F_{i_1}(x_{d_2})^T - x_{d_2}^T - x_{F_{i_1}}^T \\
            \vdots \\
            F_{i_1}(x_{d_v})^T - x_{d_v}^T - x_{F_{i_1}}^T \\
            F_{i_2}(x_{d_1})^T - x_{d_1}^T - x_{F_{i_2}}^T \\
            F_{i_2}(x_{d_2})^T - x_{d_2}^T - x_{F_{i_2}}^T \\
            \vdots \\
            F_{i_2}(x_{d_v})^T - x_{d_v}^T - x_{F_{i_2}}^T \\
            \vdots \\
            F_{i_T}(x_{d_1})^T - x_{d_1}^T - x_{F_{i_T}}^T \\
            F_{i_T}(x_{d_2})^T - x_{d_2}^T - x_{F_{i_T}}^T \\
            \vdots \\
            F_T(x_{d_v}) - x_{d_v} - x_{F_{i_T}}.
        \end{bmatrix}^T  \\
        &\qquad \qquad \qquad \qquad \qquad \qquad 
        W_{21} = W_{22} = W_{23}, \text{ and} \qquad 
        W_{31} = W_{32} = W_{33}.
    \end{align*}
\end{theorem}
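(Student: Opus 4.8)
\textbf{Proof proposal for Theorem~\ref{thm:direct}.}

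The plan is to extend the single-block construction of \Cref{thm:step} to a three-block stacked architecture, where block $k$ performs exactly the $k$-th step of the composition inside a single forward pass, writing its output into the residual stream at the data-token position. First I would verify the invariant I want to maintain: after block $k$, the $4$-th column of $\mathrm{Block}_k(\cdots)$ should carry the token embedding $F_k \circ \dots \circ F_1(x_d)$ (plus bookkeeping terms absorbed by later layers) while the task-token columns and their position encodings are left intact so that blocks $k+1, k+2$ can still read off $x_{F_{k+1}}$ and $x_{F_{k+2}}$. The key device is that the position encodings $p_i = (p_{i1}, p_{i2}, p_{i3})$ are split into three blocks, one per step; block $k$'s query matrix $Q_k$ reads only the $k$-th sub-block of the position encoding ($I_{\bar d_p}$ sitting in the $k$-th diagonal slot), and the Gram-matrix conditions $P_k^T P_k$ are chosen precisely so that the data-token column $4$ attends to task-token column $k$ and to itself. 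The scaling $K_k = K_1/k$ compensates for the fact that, by step $k$, column $4$'s position encoding overlaps with $k$ of the task-token position sub-blocks (the ones already consumed remain nonzero in the inner product), so the raw attention logits would otherwise grow linearly in $k$; dividing by $k$ restores a clean selection.

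Next I would check the attention computation explicitly. With $Q_k, K_k$ as given, $Z^T Q_k Z$ has a nonzero entry in row $4$, column $k$ (data token attending to the $k$-th task token via the $k$-th position sub-block), and the causal mask $M$ leaves it; $(K_k Z)$ carries the one-hot task-token identity in its "task" coordinates. So $\mathrm{Attn}_k(Z)$ contributes, at column $4$, the vector $x_{F_k}$ (in the task subspace), added via the residual to the current data-token embedding. Then the MLP of block $k$ is literally a copy of the step-by-step MLP: $W_{1k}$ projects the sum (current data embedding $\oplus$ selected task embedding, living in orthogonal $d_v$- and $d_f$-dimensional subspaces) onto the $d_f d_v$ one-hot indicator of the pair $(\text{current token}, F_k)$ — the ReLU-of-$(1-\text{count of mismatches})$ trick from \Cref{thm:step} — and $W_{2k}$ emits $F_k(\text{current token})^T - (\text{current token})^T - x_{F_k}^T$, so that after the two residual additions column $4$ becomes exactly $F_k(\text{current token})$. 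Here I would note why $W_{21}=W_{22}=W_{23}$ and $W_{31}=W_{32}=W_{33}$: the lookup table is the same object regardless of which step we are at, only the task token selected by attention differs, so the MLP weights can be shared across blocks (the statement's equalities). I would also have to confirm the bookkeeping: the $-x_{F_k}$ term cancels the task-token contribution injected by the attention residual, and the $-(\text{current token})$ term cancels the incoming data embedding, leaving a clean overwrite — exactly as in the step-by-step proof, so this is a direct port.

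Finally I would close the induction: after block $3$, column $4$ holds $F_3 \circ F_2 \circ F_1(x_d)$, and $\mathrm{Softmax}(W_e \mathrm{Tr}(Z))$ with $W_e$ the (tied) unembedding puts all mass on that token, proving compositional generalization to \emph{every} prompt $[x_{F_1}, x_{F_2}, x_{F_3}, x_d]$ regardless of whether that triple appeared in training — the weights never depend on the particular $F_i$'s, only on the fixed family $\mathcal F$. The full details would be deferred to an appendix (in the same spirit as \Cref{thm:step}, whose proof is in \Cref{s:app:proof_step_by_step}); I would expect the main obstacle to be purely organizational rather than conceptual — carefully tracking, block by block, that the position sub-blocks already "used up" by earlier attention layers do not leak into later selections, which is exactly what the $1/k$ rescaling of $K_k$ and the specific off-diagonal-free structure of $P_k^T P_k$ are engineered to guarantee. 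A secondary subtlety worth a remark is that, unlike the step-by-step case where a $1$-layer model suffices, here the depth must be at least $L$ (here $3$): since each composition step can only be realized after the previous one is written to the residual stream and each block advances the computation by exactly one step, which also explains the empirical finding in \Cref{fig:app:arch_direct} that $2$--$3$ layers are needed.
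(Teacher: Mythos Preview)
Your overall strategy matches the paper's proof exactly: block $k$ uses the $k$-th position-encoding sub-block (via $Q_k$) to route $x_{F_k}$ into column~4, and the shared MLP---identical to that of \Cref{thm:step}---advances the composition one step, with the $-x_{F_k}$ and $-(\text{current token})$ terms in $W_{2k}$ cleaning up the residual. The induction you sketch is precisely what the paper carries out.

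However, your explanation of the scaling $K_k = K_1/k$ is wrong and would not survive the explicit computation. In this parameterization $\mathrm{Attn}(Z) = (KZ)(M \odot Z^TQZ)$, so $K$ is the \emph{value} projection, not a factor in the attention logits; moreover the logits $Z_{B_{k-1}}^T Q_k Z_{B_{k-1}} = P_k^T P_k$ are fixed $0/1$ Gram matrices (since $Q_k$ reads only the $k$-th position sub-block and the position encodings never change), so nothing there grows with $k$. The actual reason for the $1/k$ is that the \emph{task-token columns} accumulate: every $P_k^T P_k$ has $1$'s on its diagonal, so each block's attention adds another copy of $x_{F_i}$ to column $i$ through the residual, and after block $k-1$ those columns hold $k\cdot x_{F_i}$. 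The value projection $K_k Z_{B_{k-1}}$ must deliver a unit one-hot $x_{F_k}$ to column~4 for the MLP's pair-indicator trick to fire correctly, hence the division by $k$. For the same reason your ``leaking'' worry about previously-used position sub-blocks is a non-issue: $Q_k$ zeroes out all but the $k$-th sub-block, so earlier ones are simply invisible to block $k$.
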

\begin{proof}
    See~\Cref{s:app:proof_direct}.
\end{proof}

\subsection{Difference between the direct and step-by-step prompt formats}\label{s:app:step_vs_direct}

The ability to run multiple forward passes through the Transformer allows us tackle a richer class of problems~\citep{merrill2023expresssive}. This ability differentiates the step-by-step and direct prompt formats. In the step-by-step prompt format, the Transformer makes $L$ different forward passes, while the direct prompt format allows only one forward pass through the model to generate the output.  This is also mirrored in our constructions in~\cref{s:app:theory_step,s:app:theory_direct}---a model for the step-by-step prompt format requires only 1 layer, while one for the direct prompt format uses $L=3$ layers to compensate for the lack of multiple forward passes. We expect that a Transformer for the direct prompt format cannot circumvent these computations and conjecture that our Transformer construction for the direct format (in~\cref{s:app:proof_direct}) is efficient with respect to the number of layers.

\begin{conjecture}
We conjecture that a Transformer with width of $\poly(|\mathcal{F}|)$,  needs $\mathcal{O}(L)$ layers in the direct prompt format compared to the $\mathcal{O}(1)$ layers step-by-step format in order to compositionally generalize on our synthetic task.  
\end{conjecture}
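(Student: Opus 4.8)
Since the $\mathcal{O}(1)$-layer upper bound for the step-by-step format is already given by \Cref{thm:step} and the $\mathcal{O}(L)$-layer upper bound for the direct format by \Cref{thm:direct}, what remains is a \emph{depth lower bound}: the plan is to show that no Transformer of depth $d$ and width $\poly(|\mathcal{F}|)$ can satisfy \Cref{def:comp} in the direct-prompt format when $d$ is too small relative to $L$. Note that the restriction on the training data is irrelevant to this argument: a model that compositionally generalizes must, by \Cref{def:comp}, compute $F_L\circ\cdots\circ F_1(x_d)$ correctly on \emph{all} task/data inputs, so a worst-case argument is legitimate.

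First I would reduce the direct-format task to a problem of known circuit complexity. Restricting to the permutation family $\mathcal{F}_p$ (so the acting group is $S_6$; the bijection family $\mathcal{F}_b$ gives $S_{10}$ and works identically), computing $F_L\circ\cdots\circ F_1(x_d)$ is exactly the evaluation of a length-$L$ word over a non-solvable group followed by its application to a point --- i.e., pointer-chasing through $L$ permutations of a constant-size set. Taking $L$ as the input-size parameter, Barrington's theorem makes this family $\mathsf{NC}^1$-hard under $\mathsf{AC}^0$ reductions.

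Second, I would invoke the standard simulation of bounded-precision Transformers by uniform threshold circuits: a depth-$d$, width-$w$, $O(\log(Lw))$-precision Transformer --- with softmax attention or the linearized attention of \Cref{s:app:theory} --- is computable by a uniform threshold circuit of depth $O(d)$ and size $\poly(L,w)$ \citep{merrill2023expresssive}. Hence a \emph{constant}-depth, $\poly(|\mathcal{F}|)=\poly(L)$-width Transformer solving the direct task for all $L$ would place an $\mathsf{NC}^1$-hard problem inside $\mathsf{TC}^0$, collapsing $\mathsf{TC}^0=\mathsf{NC}^1$. Under the standard conjecture $\mathsf{TC}^0\subsetneq\mathsf{NC}^1$, this already establishes the qualitative separation the conjecture asserts: direct-format compositional generalization requires super-constant depth, whereas \Cref{thm:step} shows the step-by-step format needs only constant depth.

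The main obstacle will be upgrading this $\omega(1)$ lower bound to the full $\Omega(L)$ --- in particular, ruling out depth $\Theta(\log L)$. This is a genuine difficulty: a depth-$O(\log L)$, $\poly$-width Transformer can in principle implement a parallel-prefix (scan) over the group, combining adjacent partial products $g\mapsto F_i\circ g$ through a constant-size lookup table and halving the number of active blocks per layer, so there is no purely information-theoretic barrier at $\log L$. Two routes seem available. (i) Strengthen the hardness input --- e.g., appeal to a depth-hierarchy conjecture for threshold circuits --- but even a tight conditional bound of this kind only reaches $\Omega(\log L)$. (ii) The route I believe matches the paper's empirical story (and the informal argument of \Cref{s:app:step_vs_direct}): restrict attention to solutions reachable by training on $\poly(|\mathcal{F}|)$ in-order compositions and prove an implicit-bias statement --- such solutions stay ``local,'' executing $O(1)$ composition steps per layer and never internalizing the full group multiplication $G\times G\to G$, which would force depth $\Omega(L)$. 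Turning ``few training compositions $\Rightarrow$ local computation'' into a theorem about gradient descent on Transformers is the crux and is beyond current techniques; accordingly, I would expect the provable statement to be the conditional super-constant separation above, with the exact $\Theta(L)$ remaining conjectural --- which is presumably why it is stated as a conjecture.
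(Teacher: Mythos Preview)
The paper does not attempt to prove this statement --- it is explicitly left as a conjecture, supported only by the informal remark that a one-layer direct-format solution would require width exponential in $|\mathcal{F}|$ (via universal approximation) and by the heuristic coupon-collector comparison in \Cref{s:app:step_vs_direct}. Your proposal therefore goes substantially beyond what the paper offers. The Barrington-plus-\citet{merrill2023expresssive} argument is sound and delivers a conditional result the paper does not establish: under $\mathsf{TC}^0\subsetneq\mathsf{NC}^1$, no constant-depth, $\poly(L)$-width, log-precision Transformer can compositionally generalize in the direct format when the functions generate a non-solvable group (which both $\mathcal{F}_p\subseteq S_6$ and $\mathcal{F}_b\subseteq S_{10}$ can be chosen to do), whereas \Cref{thm:step} shows one layer suffices step-by-step.

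Your identification of the $O(\log L)$ parallel-prefix obstruction is the crucial observation and in fact cuts \emph{against} the conjecture read as a pure expressivity claim: since the acting group has fixed size, iterated multiplication lies in $\mathsf{NC}^1$, and a balanced-tree Transformer of depth $O(\log L)$ can plausibly compute it. So the literal $\Omega(L)$ depth bound is almost certainly false as an expressivity statement, and your diagnosis --- that the $\Theta(L)$ claim must instead be read as an implicit-bias/learnability statement about solutions reached by gradient descent from limited training compositions --- is exactly right and aligns with the paper's own surrounding discussion of gradient descent preferring memorizing minima. Your conclusion that the provable content is the conditional super-constant separation, with the exact $\Theta(L)$ remaining open, is correct and is precisely why the paper labels this a conjecture rather than a theorem.
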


That is, a model must compute all $L$ intermediate outputs of the composition across different layers of the Transformer.   We expand on this further in the next subsection. We also note that as per the universal approximation theorem, it is certainly possible to construct a Transformer with 1-layer such that it generalizes for the direct prompt format; \textit{however, such a model must have its width to be exponential in $|\mathcal{F}|$ in order to store $|\mathcal{F}|^L$ different functions in a single layer.}

\subsubsection{How many training compositions does each prompt format need?}
To further understand the difference between the two prompt formats, we will use a (highly simplified) model to reason about the number of function compositions in the training data that is required for perfect compositional generalization on our task. Let us consider a composition of $L$ of functions from $\mathcal{F}$. We assume that the compositions in the training data $\mathcal{F}_{\text{train}}^L \subset \mathcal{F}^L$ are sampled uniformly at random from the set of all compositions. 

For this analysis, we assume that the Transformer can perfectly identify which functions to compose---which we ascribe to the attention layers---and will focus entirely on capability acquisition which we hypothesize is carried out by the MLP layers. We assume that a Transformer for the step-by-step prompt format must learn a function (capability) only once, while a Transformer for the direct prompt format must learn the function $L$ different times---once for each layer of the Transformer. If the function composition $F^{(l)} \in \mathcal{F}_{\text{train}}^L$ occurs in the training data, we assume that the Transformer for the step-by-step format has learned all the capabilities $F^{(l)}_i \in F^{(l)}$ for $i \in [1, L]$, while a Transformer for the direct prompt format can only learn capability $F^{(l)}_i$ at layer $i$. These assumptions are informed by~\cref{thm:step,thm:direct}.

\paragraph{Detour into the coupon collector's problem.}
In order to learn all $F = |\mathcal{F}|$ capabilities, the training data must contain each capability at least once. We note that this is the coupon collector's problem~\citep{myers2006some}: the collector seeks all distinct coupons and recieves a coupon at every round drawn uniformly at random.  The number of rounds corresponds to the number of function compositions in the training data and we would like to calculate the expected number of rounds required to learn all capabilities. It is a well known result that the expected number of rounds to collect all $F$ coupons is $F H_F$ where $H_F$ is the Harmonic number; asymptotically this is $\mathcal{O}(F \log F)$. Furthermore, the probability that we complete a collection of size $f$, in $n$ rounds is
\[ p(L, f) = \frac{F!}{F^L} \stirling{F-1}{L-1}, \]
where $\stirling{F-1}{K-1}$ is the Stirling number of the second kind. 

In the step-by-step prompt format, we observe $L$ capabilities (or coupons) with every composition. All capabilities are learned if we observe each of them in at least one training sample. The expected number of training compositions $N$ required to learn all capabilities is $\mathcal{O}(\frac{F \log F}{L})$ (see~\citet{Xu_Tang_2011}). On the other hand, the direct prompt format can be treated as $L$ independent coupon collector problems and must observe each capability once for each of the $L$ layers. The expected number of rounds to learn all capabilities is the is the expected value of maximum number of rounds for $L$ indepedent coupon collector problems. If we apply Chebyshev's inequality, we get
\[ P( N \geq F H_F + c \log F) \leq \frac{\pi^2}{6c^2 \log^2 F},   \]
since the variance of $N$ is upper bounded by $\frac{n^2 \pi^2}{6}$. Hence, the  maximum value of $L$ different runs is $\mathcal{O}(F \log F)$ as $n \rightarrow \infty$, or in other words, the expected number of rounds to learn all capabilities is $\mathcal{O}(F \log F)$. The expected number of training compositions differ by a factor of $L$ between the two prompt formats, which tallies with the observation that a Transformer is expected to learn the same set of capabilities $L$ different times in the direct format.

In practice, we find that Transformers for the direct format can sometimes fail to compositionally generalize, even with a large number of compositions in the training data (\Cref{ss:prompting}). We hypothesize that this is attributable to the optimization landscape, i.e., gradient descent is unable to find weights that compositionally generalize and instead prefers weights that memorize compositions of functions present in the training data. In the direct prompt, gradient descent must recover the individual capabilities from a set of compositions of bijections and this is a computationally hard problem since it is similar to finding the minimal generating set of a group (its time complexity is linear in the size of the group which is $\mathcal{O}(F^L)$).


%
\subsection{Proof of \Cref{thm:step}}\label{s:app:proof_step_by_step}
\paragraph{Step 1: Computing the attention layer.}
The attention layer copies the task tokens onto the relevant data token similar to an
induction head~\citep{olsson2022context}. We first compute the query and value matrices of the attention.
\begin{align*}
    Z^T Q Z &= 
    \begin{bmatrix}
        x_{F_1}^T & p_1^T \\ 
        x_{F_2}^T & p_2^T \\ 
        x_{F_3}^T & p_3^T \\
        x_d^T & p_4^T \\
        F_1(x_d)^T & p_5\\
        F_2 \circ F_1(x_d)^T & p_6^T
    \end{bmatrix}
    \begin{bmatrix}
            0_{d \times d} & 0_{d \times d_p}  \\
            0_{d_p \times d} & I_{d_p \times d_p}  \\
    \end{bmatrix}
    \begin{bmatrix}
        x_{F_1} & x_{F_2} & x_{F_3} & \cdots 
                & F_2 \circ F_1(x_d) \\
        p_1 & p_2 & p_3 & \cdots & p_6
    \end{bmatrix} \\
    &= \begin{bmatrix}
        0 & p_1^T \\ 
        0 & p_2^T \\ 
        0 & p_3^T \\
        0 & p_4^T \\
        0 & p_5^T \\
        0 & p_6^T
    \end{bmatrix}
    \begin{bmatrix}
        x_{F_1} & x_{F_2} & x_{F_3} & \cdots 
                & F_2 \circ F_1(x_d) \\
        p_1 & p_2 & p_3 & \cdots & p_6
    \end{bmatrix} = P^T P 
\end{align*}
Our construction considers a $P$ such that $p_i = p_{i+4}$ for all $i \in [1, 3]$ and $p_i \cdot p_j = 0$ for all $j \in [1, 3]$ and $j \neq i$. The mask $M$ converts $P^T P$ into an upper triangular matrix, and zeroes out all entries in the lower triangle of the matrix.
\[ M \odot (Z^T Q Z) = M \odot (P^T P) =
    M \odot \begin{bmatrix}
            I_{3 \times 3} & I_{3 \times 3} \\
            I_{3 \times 3} & I_{3 \times 3} 
        \end{bmatrix} = 
    \begin{bmatrix}
        I_{3 \times 3} & I_{3 \times 3} \\
        0_{3 \times 3} & I_{3 \times 3}
    \end{bmatrix}
\]
The attention layer computes
\begin{align*}
    \mathrm{Attn}(Z) 
    &= (KZ) (M \odot Z^T Q Z) \\
    &= (KZ) (M \odot P P^T) \\
    &= 
    \begin{bmatrix}
        0_{d_v \times d_v} & 0_{d_f \times d_v} &  0_{d \times d_p}  \\
        0_{d_f \times d_v} & I_{d_f \times d_f} &  0_{d \times d_p}  \\
        0_{d_v \times d} & 0_{d_f \times d} & 0_{d_p \times d_p} \\
    \end{bmatrix} 
    \begin{bmatrix}
        x_{F_1} & x_{F_2} & x_{F_3} & \cdots 
                & F_2 \circ F_1(x_d) \\
        p_1 & p_2 & p_3 & \cdots & p_6
    \end{bmatrix}
    \begin{bmatrix}
            I_{3 \times 3} & I_{3 \times 3} \\
            0_{3 \times 3} & I_{3 \times 3} \\
     \end{bmatrix} \\
    &= \begin{bmatrix}
        x_{F_1} & x_{F_2} & x_{F_3} & 0_{d} & 0_{d} & 0_{d} \\
        0_{d_p} & 0_{d_p} & 0_{d_p} & 0_{d_p} & 0_{d_p} & 0_{d_p} \\
    \end{bmatrix}
    \begin{bmatrix}
            I_{3 \times 3} & I_{3 \times 3} \\
            0_{3 \times 3} & I_{3 \times 3} \\
     \end{bmatrix} \\
    &= \begin{bmatrix}
        x_{F_1} & x_{F_2} & x_{F_3} & x_{F_1} & x_{F_2} & x_{F_3} \\
        0_{d_p} & 0_{d_p} & 0_{d_p} & 0_{d_p} & 0_{d_p} & 0_{d_p}
     \end{bmatrix}
\end{align*}
which when added to $Z$ yields
\[ \mathrm{Attn}(Z) + Z = 
    \begin{bmatrix}
        2 x_{F_1} & 2 x_{F_2} & 2 x_{F_3} & x_d + x_{F_1}
                  & F_1(x_d) + x_{F_2} & F_2 \circ F_1(x_d) + x_{F_3} \\
        p_1 & p_2 & p_3 & p_4 & p_5 & p_6,
    \end{bmatrix} \]
if we also include the residual stream to the output of the attention layer.

\paragraph{Step 2: Computing the MLP layer.}
After the attention layer, the data and task tokens are aggregated at one location in orthogonal sub-spaces. The MLP uses the task and data token to compute the function. The first fully-connected layer projects the input $\mathbb{R}_{d_v d_t}$, which uniquely identifies the task and data tokens which is used to retrived the function from $W_2$. The first fully-connected layer computes
\begin{align*}
    (\mathrm{Attn}(Z) + Z)^T W_1^T &=
    \begin{bmatrix}
        2 x_{F_1}^T & p_1^T \\
        2 x_{F_2}^T & p_2^T \\
        2 x_{F_3}^T & p_3^T \\
        x_d^T + x_{F_1}^T & p_4^T \\
        F_1(x_d)^T + x_{F_2}^T & p_5^T \\
        F_2(F_1(x_d))^T + x_{F_3}^T & p_6^T
    \end{bmatrix}
    \begin{bmatrix}
            \mathbbm{1}_{x_{d_1}}^T & \mathbbm{1}_{x_{d_1}}^T & \mathbbm{1}_{x_{d_1}}^T & \cdots & \mathbbm{1}_{x_{d_1}}^T \\
            \mathbbm{1}_{x_{d_2}}^T & \mathbbm{1}_{x_{d_2}}^T & \mathbbm{1}_{x_{d_2}}^T & \cdots & \mathbbm{1}_{x_{d_2}}^T \\
            \vdots & \vdots & \ddots & \vdots \\
            \mathbbm{1}_{x_{d_v}}^T & \mathbbm{1}_{x_{d_v}}^T & \mathbbm{1}_{x_{d_v}}^T & \cdots & \mathbbm{1}_{x_{d_v}}^T \\
            0_{d_v}^T & -1_{d_v}^T & -1_{d_v}^T & \cdots & - 1_{d_v}^T \\
            - 1_{d_v}^T & 0_{d_v}^T &  - 1_{d_v}^T & \cdots & - 1_{d_v}^T \\
            \vdots & \vdots & \ddots & \vdots \\
            - 1_{d_v}^T & - 1_{d_v}^T & -1_{d_v}^T &  \cdots & 0_{d_v}^T \\
            0_{d_p \times d_v} & 0_{d_p \times d_v} & 0_{d_p \times d_v} & \cdots & 0_{d_p \times d_v} \\
        \end{bmatrix} \\
        &= \begin{bmatrix}
            -2_{d_v}^T & \cdots & \cdots & 0_{d_v}^T & \cdots & \cdots & -2_{d_v}^T \\
            -2_{d_v}^T & \cdots & 0_{d_v}^T & \cdots & \cdots & \cdots & -2_{d_v}^T \\
            -2_{d_v}^T & \cdots & \cdots & \cdots & 0_{d_v}^T & \cdots & -2_{d_v}^T \\
            -1_{d_v}^T + \mathbbm{1}_{x_d}^T & \cdots & \cdots & \mathbbm{1}_{x_d}^T & \cdots & \cdots & -1_{d_v}^T + \mathbbm{1}_{x_d}^T\\
            -1_{d_v}^T + \mathbbm{1}_{F_1(x_d)}^T   & \cdots &  \mathbbm{1}_{F_1(x_d)}^T & \cdots & \cdots & \cdots & -1_{d_v}^T + \mathbbm{1}_{F_1(x_d)}^T \\
            -1_{d_v}^T + \mathbbm{1}_{F_2 \circ F_1(x_d)}^T & \cdots &  & \cdots & \mathbbm{1}_{F_2 \circ F_1(x_d)}^T & \cdots & -1_{d_v}^T + \mathbbm{1}_{F_2 \circ F_1(x_d)}^T \\
        \end{bmatrix}
\end{align*}
The above matrix has $d_f d_v$ columns represented as $d_f$ blocks of size $d_v$. The $0$ matrix in the first, second and third row occupy $d_v$ columns each. In particular, they occupy the blocks $j_1$, $j_2$ and $j_3$ where $F_i = F_{i_{j_i}}$, i.e. the block number corresponds to index in the one-hot representation of the task tokens. Let $\mathbbm{1}_{(x, F)}$ denote a one-hot vector in $\mathbb{R}^{d_v \times d_f}$, i.e., it is a one-hot vector that uniquely identifies the task and data token. We can succincintly express the output after the non-linearity as follows:
\begin{align*}
    \mathrm{ReLU}(W_1 (\mathrm{Attn}(Z) + Z))
    &= \mathrm{ReLU}((\mathrm{Attn}(Z) + Z)^T W_1^T)^T)  \\
        &= \begin{bmatrix}
            0_{d_v} & 0_{d_v}  & 0_{d_v}  & 0_{d_v} & 0_{d_v}  & 0_{d_v}  \\
            0_{d_v} & 0_{d_v}  & 0_{d_v}  & \cdots   & \cdots   & \cdots  \\
            0_{d_v} & 0_{d_v}  & 0_{d_v}  & \cdots  & \mathbbm{1}_{F_1(x_d)}  & \cdots \\
            0_{d_v} & 0_{d_v}  & 0_{d_v}  & \mathbbm{1}_{x_d}  & \cdots  & \cdots  \\
            0_{d_v} & 0_{d_v}  & 0_{d_v}  & \cdots  & \cdots  & \mathbbm{1}_{F_2 \circ F_1(x_d)}  \\
            \vdots & \vdots & \vdots & \vdots & \vdots & \vdots \\
            0_{d_v} & 0_{d_v}  & 0_{d_v}  & 0_{d_v} & 0_{d_v}  & 0_{d_v}  \\
        \end{bmatrix} \\
    &= \begin{bmatrix}
        0_{d_v d_f} &
        0_{d_v d_f} &
        0_{d_v d_f} &
        \mathbbm{1}_{(x_{d}, F_1)} &
        \mathbbm{1}_{(F_1(x_{d}), F_2)} &
        \mathbbm{1}_{(F_2 \circ F_1(x_{d}), F_3)}
    \end{bmatrix}
\end{align*}

Including the final weight matrix $W_2$, we get
\begin{align*}
    W_2 \mathrm{ReLU}(W_1 (\mathrm{Attn}(Z) + Z)) 
    &= W_2
    \begin{bmatrix}
        0_{d_v d_f} &
        0_{d_v d_f} &
        0_{d_v d_f} &
        \mathbbm{1}_{(x_{d}, F_1)} &
        \mathbbm{1}_{(F_1(x_{d}), F_2)} &
        \mathbbm{1}_{(F_2 \circ F_1(x_{d}), F_3)}
    \end{bmatrix} \\
    &= \begin{bmatrix}
        0_{d}^T & 0_{d_p}^T\\
        0_{d}^T & 0_{d_p}^T\\
        0_{d}^T & 0_{d_p}^T\\
        F_1(x_d)^T - x_d - x_{F_1} & 0_{d_p}^T\\
        F_2\circ F_1(x_d) - x_{F_1(x_d)} - x_{F_2} & 0_{d_p}^T\\
        F_3\circ F_2 \circ F_1 (x_d) - x_{F_2 \circ F_1(x_d)} - x_{F_3} & 0_{d_p}^T\\
    \end{bmatrix}^T 
\end{align*}
Hence, the output of the Transformer is
\begin{align*}
    \mathrm{Tr}(Z) 
    &= \mathrm{MLP}(\mathrm{Attn}(Z) + Z) + (\mathrm{Attn}(Z) + Z) \\
    &= \begin{bmatrix}
        0_{d}^T & 0_{d_p}^T\\
        0_{d}^T & 0_{d_p}^T\\
        0_{d}^T & 0_{d_p}^T\\
        F_1(x_d)^T - x_d^T - x_{F_1}^T & 0_{d_p}^T \\
        F_2\circ F_1(x_d)^T - x_{F_1(x_d)}^T - x_{F_2}^T & 0_{d_p}^T\\
        F_3\circ F_2 \circ F_1 (x_d)^T - x_{F_2 \circ F_1(x_d)}^T - x_{F_3}^T & 0_{d_p}^T\\
    \end{bmatrix}^T +
    \begin{bmatrix}
        2 x_{F_1}^T & p_1^T \\
        2 x_{F_2}^T & p_2^T \\
        2 x_{F_3}^T & p_3^T \\
        x_d^T + x_{F_1}^T & p_4^T \\
        x_{F_1(x_d)}^T + x_{F_2}^T & p_5^T \\
        x_{F_2 \circ F_1(x_d)}^T + x_{F_3}^T & p_6^T \\
    \end{bmatrix}^T \\
    &= \begin{bmatrix}
        2 x_{F_1} & 2 x_{F_2} & 2 x_{F_3} & F_1(x_d)
                  & F_2 \circ F_1(x_d)  & F_3 \circ F_2 \circ F_1(x_d) \\
        p_1 & p_2 & p_3 & p_4 & p_5 & p_6
    \end{bmatrix} \tag{\theequation}\label{eq:tr}
\end{align*}

If we set 
\[ W_e = \begin{bmatrix}
            I_{d \times d} & 0_{d \times d_p}  \\
            0_{d_p \times d} & o_{d_p \times d_p}  \\
    \end{bmatrix},
\]
then $W_e \mathrm{Tr}(Z)$ evaluates to
\[ \begin{bmatrix}
    2 x_{F_1} & 2 x_{F_2} & 2 x_{F_3} & F_1(x_d)
              & F_2 \circ F_1(x_d)  & F_3 \circ F_2 \circ F_1(x_d) \\
\end{bmatrix} \]
which will assign high probabilities to the desired token when passed through a Softmax
layer. Hence, a Transformer prompted with $[x_{F_1}, x_{F_2}, x_{F_3}, x_d]$ will auto-regressively generate
$[F_1(x_d), F_2 \circ F_1(x_d), F_3 \circ F_2 \circ F_1(x_d)]$ for any combination of data and task tokens.
\qed

\subsection{Proof of \Cref{thm:direct}}\label{s:app:proof_direct}

The details of construction are similar to~\Cref{s:app:proof_step_by_step}.

\paragraph{Step 1: Computing the output of the first block.}
The first Transformer block computes the first step of the composition. The attention layer in particular, copies the relevant task token to the data token. The value and query matrices of the attention layer in the first Transformer block are
\begin{align*}
    Z^T Q_1 Z &= 
    \begin{bmatrix}
        x_{F_1}^T & p_{11}^T & p_{21}^T & p_{31}^T \\ 
        x_{F_2}^T & p_{12}^T & p_{22}^T & p_{32}^T \\
        x_{F_3}^T & p_{13}^T & p_{23}^T & p_{33}^T \\
        x_d^T & p_{14}^T & p_{24}^T & p_{34}^T
    \end{bmatrix}
    \begin{bmatrix}
            0_{d \times d} & 0_{d \times \bar d_p} & 0_{d \times \bar d_p} & 0_{d \times
            \bar d_p} \\
            0_{\bar d_p \times d} & I_{\bar d_p \times \bar d_p} & 0_{\bar d_p \times \bar d_p} & 0_{\bar d_p \times \bar d_p} \\
            0_{\bar d_p \times d} & 0_{\bar d_p \times \bar d_p} & 0_{\bar d_p \times \bar
            d_p} & 0_{\bar d_p \times \bar d_p} \\
            0_{\bar d_p \times d} & 0_{\bar d_p \times \bar d_p} & 0_{\bar d_p \times \bar d_p} & 0_{\bar d_p \times \bar d_p} 
    \end{bmatrix}
    \begin{bmatrix}
        x_{F_1} & x_{F_2} & x_{F_3} & x_d \\
        p_{11} & p_{12} & p_{13} & p_{14} \\
        p_{21} & p_{22} & p_{23} & p_{24} \\
        p_{31} & p_{32} & p_{33} & p_{34}
    \end{bmatrix} \\
              &= P_1^T P_1,
\end{align*}
and 
\[ 
    K_1 Z  = \begin{bmatrix}
            0_{d_v \times d_v} & 0_{d_f \times d_v} &  0_{d \times d_p}  \\
            0_{d_f \times d_v} & I_{d_f} &  0_{d \times d_p}  \\
            0_{d_v \times d} & 0_{d_f \times d} & 0_{d_p \times d_p} \\
        \end{bmatrix}
    \begin{bmatrix}
        x_{F_1} & x_{F_2} & x_{F_3} & x_d \\
        p_{1} & p_{2} & p_{3} & p_{4}
    \end{bmatrix}
    = \begin{bmatrix}
        x_{F_1} & x_{F_2} & x_{F_3} & 0_{d} \\
        0_{d_p} & 0_{d_p} & 0_{d_p} & 0_{d_p} \\
    \end{bmatrix}
\]
Using the above, the output of the first attention layer added to the residual stream is
\begin{align*}
    \mathrm{Attn}_1(Z) + Z 
    &=  (K_1 Z) (M \odot Z^T Q_1 Z) + Z \\
    &=  (K_1 Z) (M \odot P_1^T P_1) + Z \\
    &= \begin{bmatrix}
        x_{F_1} & x_{F_2} & x_{F_3} & 0 \\
        0_{d_p} & 0_{d_p} & 0_{d_p} & 0_{d_p}
    \end{bmatrix}
    \begin{bmatrix}
            1 & 0 & 0 & 1 \\
            0 & 1 & 0 & 0 \\
            0 & 0 & 1 & 0 \\
            0 & 0 & 0 & 1 
    \end{bmatrix} + Z \\
    &= \begin{bmatrix}
        2x_{F_1} & 2x_{F_2} & 2x_{F_3} & x_d + x_{F_1}\\
        p_{1} & p_{2} & p_{3} & p_{4} \\
    \end{bmatrix}
\end{align*}

Note that $W_{11}$ and $W_{21}$ are identical to $W_1$ and $W_2$ in~\Cref{eq:tr}, and performing a similar calculation yields
\begin{align*}
    \mathrm{Block}_1(Z) 
    &= W_{21} \mathrm{ReLU}(W_{11} (\mathrm{Attn}_1(Z) + Z)) + (\mathrm{Attn}_1(Z) + Z) \\
    &= \begin{bmatrix}
        2 x_{F_1} & 2 x_{F_2} & 2 x_{F_3} & F_1(x_d) \\
        p_1 & p_2 & p_3 & p_4 \\
    \end{bmatrix}
    = Z_{B_1}.
\end{align*}
We denote the output of the first Transformer block by $Z_{B_1}$.

\paragraph{Step 2: Computing the output of the second block.}
The second block uses the output of the first Transformer block to compute the second step of the composition.
We start similarly by computing the query and value matrices of the attention layer, i.e.,
\begin{align*}
    Z_{B_1}^T &Q_2 Z_{B_1} =  \\
    &=\begin{bmatrix}
        2x_{F_1}^T & p_{11}^T & p_{21}^T & p_{31}^T \\ 
        2x_{F_2}^T & p_{12}^T & p_{22}^T & p_{32}^T \\
        2x_{F_3}^T & p_{13}^T & p_{23}^T & p_{33}^T \\
        F_1(x_d)^T & p_{14}^T & p_{24}^T & p_{34}^T
    \end{bmatrix}
    \begin{bmatrix}
            0_{d \times d} & 0_{d \times \bar d_p} & 0_{d \times \bar d_p} & 0_{d \times
            \bar d_p} \\
            0_{\bar d_p \times d} & 0_{\bar d_p \times \bar d_p} & 0_{\bar d_p \times \bar d_p} & 0_{\bar d_p \times \bar d_p} \\
            0_{\bar d_p \times d} & 0_{\bar d_p \times \bar d_p} & I_{\bar d_p \times \bar
            d_p} & 0_{\bar d_p \times \bar d_p} \\
            0_{\bar d_p \times d} & 0_{\bar d_p \times \bar d_p} & 0_{\bar d_p \times \bar d_p} & 0_{\bar d_p \times \bar d_p} 
    \end{bmatrix}
    \begin{bmatrix}
        2 x_{F_1} & 2 x_{F_2} & 2 x_{F_3} & F_1(x_d) \\
        p_{11} & p_{12} & p_{13} & p_{14} \\
        p_{21} & p_{22} & p_{23} & p_{24} \\
        p_{31} & p_{32} & p_{33} & p_{34}
    \end{bmatrix}\\
    &= P_2^T P_2
\end{align*}
and
\[ 
    K_2 Z_{B_1}  = \frac{1}{2} \begin{bmatrix}
            0_{d_v \times d_v} & 0_{d_f \times d_v} &  0_{d \times d_p}  \\
            0_{d_f \times d_v} & I_{d_f} &  0_{d \times d_p}  \\
            0_{d_v \times d} & 0_{d_f \times d} & 0_{d_p \times d_p} \\
        \end{bmatrix}
    \begin{bmatrix}
        2x_{F_1} & 2x_{F_2} & 2x_{F_3} & F_1(x_d) \\
        p_{1} & p_{2} & p_{3} & p_{4}
    \end{bmatrix}
    = 
    \begin{bmatrix}
        x_{F_1} & x_{F_2} & x_{F_3} & 0_{d} \\
        0_{d_p} & 0_{d_p} & 0_{d_p} & 0_{d_p} 
    \end{bmatrix}.
\]
Using the above, we can compute the output of the attention layer in the
second Transformer block which evaluates to
\begin{align*}
    \mathrm{Attn}_2(Z_{B_1}) + Z_{B_1}
    &=  (K_2 Z_{B_1}) (M \odot Z_{B_1}^T Q_2 Z_{B_1}) + Z_{B_1} \\
    &=  (K_2 Z_{B_1}) (M \odot P_2^T P_2) + Z_{B_1} \\
    &= 
    \begin{bmatrix}
        x_{F_1} & x_{F_2} & x_{F_3} & 0 \\
        0 & 0 & 0 & 0
    \end{bmatrix}
    \begin{bmatrix}
            1 & 0 & 0 & 0 \\
            0 & 1 & 0 & 1 \\
            0 & 0 & 1 & 0 \\
            0 & 0 & 0 & 1 
        \end{bmatrix} + Z_{B_1} \\
    &= \begin{bmatrix}
        3x_{F_1} & 3x_{F_2} & 3x_{F_3} & F_1(x_d) + x_{F_2}\\
        p_{1} & p_{2} & p_{3} & p_{4} \\
    \end{bmatrix}.
\end{align*}
The attention layer uses sub-matrix $P_2$ of the position encodings to copy the
second task token to the data token We repeat the calculations in~\Cref{eq:tr}, with $W_{21}$ and $W_{22}$ which yields
\begin{align*}
    \mathrm{Block}_2(\mathrm{Block}_1(Z))) 
    &= W_{22} \mathrm{ReLU}(W_{21} (\mathrm{Attn}_2(Z_{B_1}) + Z_{B_1})) + (\mathrm{Attn}_2(Z_{B_1}) + Z_{B_1}) \\
    &= \begin{bmatrix}
        3 x_{F_1} & 3 x_{F_2} & 3 x_{F_3} & F_2 \circ F_1(x_d) \\
        p_1 & p_2 & p_3 & p_4 \\
    \end{bmatrix}
    = Z_{B_2}.
\end{align*}

\paragraph{Step 3: Computing the output of the final Transformer block.}
Unsurprisingly, the calculations for the last Transformer block are almost
identical. The query matrix is $Z_{B_2}^T Q_3
Z_{B_2} = P_3^T P_3$ and the value matrix is
\[ 
    K_3 Z_{B_2}  = \frac{1}{3}
    \begin{bmatrix}
        x_{F_1} & x_{F_2} & x_{F_3} & 0_{d} \\
        0_{d_p} & 0_{d_p} & 0_{d_p} & 0_{d_p} 
    \end{bmatrix}
    \begin{bmatrix}
        3x_{F_1} & 3x_{F_2} & 3x_{F_3} & F_2 \circ F_1(x_d) \\
        p_{1} & p_{2} & p_{3} & p_{4}
    \end{bmatrix}
    = \begin{bmatrix}
        x_{F_1} & x_{F_2} & x_{F_3} & 0_{d} \\
        0_{d_p} & 0_{d_p} & 0_{d_p} & 0_{d_p} 
    \end{bmatrix}.
\]
The output of the attention layer in the final block is 
\begin{align*}
    \mathrm{Attn}_3(Z_{B_3}) + Z_{B_3}
    &=  (K_3 Z_{B_2}) (M \odot Z_{B_2}^T Q_2 Z_{B_2}) + Z_{B_2} \\
    &=  (K_3 Z_{B_1}) (M \odot P_3^T P_3) + Z_{B_2} \\
    &= 
    \begin{bmatrix}
        x_{F_1} & x_{F_2} & x_{F_3} & 0 \\
        0 & 0 & 0 & 0
    \end{bmatrix}
    \begin{bmatrix}
            1 & 0 & 0 & 0 \\
            0 & 1 & 0 & 0 \\
            0 & 0 & 1 & 1 \\
            0 & 0 & 0 & 1 
        \end{bmatrix} + Z_{B_2} \\
    &= \begin{bmatrix}
        4x_{F_1} & 4x_{F_2} & 4x_{F_3} & F_2 \circ F_1 (x_d) + x_{F_3}\\
        p_{1} & p_{2} & p_{3} & p_{4}
    \end{bmatrix}.
\end{align*}
Passing the output of $\mathrm{Attn}_2(Z_{B_2})$ through the last MLP, yields the output of the Transformer, which is
\begin{align*}
    \mathrm{Tr}(Z) 
    &=\mathrm{Block}_3(\mathrm{Block}_2(\mathrm{Block}_1(Z))) \\
    &= W_{32} \mathrm{ReLU}(W_{32} (\mathrm{Attn}_3(Z_{B_2}) + Z_{B_2})) + (\mathrm{Attn}_3(Z_{B_2}) + Z_{B_2}) \\
    &= \begin{bmatrix}
        4 x_{F_1} & 4 x_{F_2} & 4 x_{F_3} & F_3 \circ F_2 \circ F_1(x_d) \\
        p_1 & p_2 & p_3 & p_4
    \end{bmatrix}.
\end{align*}
Hence, the output of the Transformer is a composition of the three functions $F_1$, $F_2$ and $F_3$ applied to token $x_d$. \qed

\end{document}